\documentclass[11pt, a4paper, confidential, copyright, goog]{google}

\usepackage[authoryear, sort&compress, round]{natbib}
\usepackage{cleveref}
\usepackage{booktabs} 
\usepackage{graphicx}
\usepackage{mathtools}
\usepackage{subcaption}
\usepackage{wrapfig}
\usepackage{comment}
\usepackage{algorithm}
\usepackage{algorithmic}

\uselogo{}

\newtheorem{theorem}{Theorem}

\newtheorem{lemma}{Lemma}

\def\mbf{\mathbf}
\def\mbb{\mathbb}
\def\mc{\mathcal}

\title{CoDistill-GRPO: A Co-Distillation Recipe for Efficient Group Relative Policy Optimization}

\correspondingauthor{kwonsm@umich.edu}

\reportnumber{} 

\author[1,*]{Soo Min Kwon}
\author[2]{Ziteng Sun}
\author[2]{Ananda Theertha Suresh}
\author[3]{Himanshu Jain}
\author[3]{Sanjiv Kumar}

\affil[1]{University of Michigan, Ann Arbor}
\affil[2]{Google Research}
\affil[3]{Google DeepMind}
\affil[*]{Work done during an internship at Google}

\begin{abstract}
Group Relative Policy Optimization (GRPO) has emerged as a powerful algorithm for improving the reasoning capabilities of language models, but often fails to improve small models due to sparse rewards on difficult tasks. Existing works mitigate this issue by leveraging a larger model, either to provide hints for rollouts or to provide dense reward signals through knowledge distillation (KD). However, this assumes the existence of such an oracle, and training one can significantly increase total training time.
In this work, we propose CoDistill-GRPO, a co-distillation algorithm that simultaneously trains a large and a small model by maximizing carefully designed GRPO objectives. The two models learn from each other: the small model uses an on-policy KD reward to learn from the large model's distribution, while the large model is updated using rollouts generated by the small model with importance reweighting, reducing the computational overhead of rollout generation. We show that CoDistill-GRPO substantially improves small model performance over standard GRPO on mathematical benchmarks across both Qwen and Llama models. Specifically, with Qwen2.5-Math-1.5B, we observe an accuracy increase of over 11.6 percentage points over the base model and an additional 6.0 percentage points over GRPO on the Minerva dataset. Interestingly, the larger model (Qwen2.5-Math-7B) trained with CoDistill-GRPO nearly matches standard GRPO performance despite training on small-model rollouts. This highlights CoDistill-GRPO as a cost-effective alternative to GRPO for larger models, yielding an approximate 18\% speedup, which may be of independent interest.

\end{abstract}

\begin{document}

\maketitle

\section{Introduction}

Reinforcement learning with verifiable rewards (RLVR)~\citep{cobbe2021trainingverifierssolvemath, wen2026reinforcement,lambert2025tulu, yue2025does, deepseekai2025deepseekr1incentivizingreasoningcapability} has emerged as a canonical paradigm for training large language models (LLMs) to efficiently solve complex reasoning tasks. The objective of RLVR is to maximize rewards derived from rule-based or deterministic verifiers (e.g., correct or incorrect binary rewards), and can generally be expressed in the following mathematical form:
\begin{align}\label{eqn:rlvr}
 \underset{\theta}{\mathrm{max}} \,\, &\mbb{E}_{q \sim P(Q)}  \Big[ \underbrace{\mbb{E}_{o \sim \pi_{\theta}(\cdot | q)} \left[r(q, o)\right]}_{\text{reward maximization}} - \beta \cdot  \underbrace{\mbb{D}_{\text{KL}}[\pi_{\theta}(\cdot | q) \| \pi_{\text{ref}}(\cdot | q)}_{\text{KL regularization}} ]  \Big],
\end{align}
where $o \sim \pi_{\theta}$ denotes an output sampled from some policy $\pi_{\theta}$ parameterized by $\theta$ for some query $q \sim P(Q)$, and $\beta \geq 0$ denotes the regularization parameter such that the policy $\pi_{\theta}$ does not deviate too much from some reference policy $\pi_{\text{ref}}$.

Since the work of DeepSeekMath~\citep{deepseek-math}, Group Relative Policy Optimization (GRPO) has received widespread attention, particularly for mathematical reasoning tasks. GRPO is particularly appealing for its simple yet effective objective, as it does not require training a separate value model for advantage computation, as is done in algorithms such as Proximal Policy Optimization (PPO)~\citep{ouyang2022training}. This sparked a wide range of efforts aimed at improving GRPO performance and enhancing its training stability~\citep{bnpo, liu2025understanding, gspo, dapo, shrivastava2025samplethinklessgroup, xu2025rolloutsusefuldownsamplingrollouts}.

On the other hand, recent research has identified the limitations of standard GRPO to effectively post-train small language models~\citep{zhang2025bread, kdrl}. The main challenge is the  ``learning cliff''~\citep{zhang2026scafgrpo}: on difficult reasoning tasks, GRPO on smaller models yields sparse rewards due to their limited capacity, leading to slow or stagnant learning compared to larger models (see \Cref{fig:frac_zeros}).
To address this issue, most existing works rely on these larger, more powerful models (or ground truth solutions) to either provide hints for rollouts~\citep{zhang2025bread} or to provide dense rewards through knowledge distillation (KD)~\citep{kdrl, agarwal2024onpolicy, lu2025onpolicydistillation}.
However, these methods typically assume the existence of a frozen large-model oracle throughout the training process. While one could first train a larger model and subsequently apply these techniques, such a sequential pipeline substantially increases total training time. Moreover, training a larger model first could widen the discrepancy between the small and large models, which has been shown to lead to poor learning when teacher and student models differ greatly in capacity and performance~\citep{huang2022knowledge, Cho2019OnTE, Mirzadeh2019ImprovedKD}.

\begin{figure}[t!]
    \centering
    \includegraphics[width=0.4\linewidth]{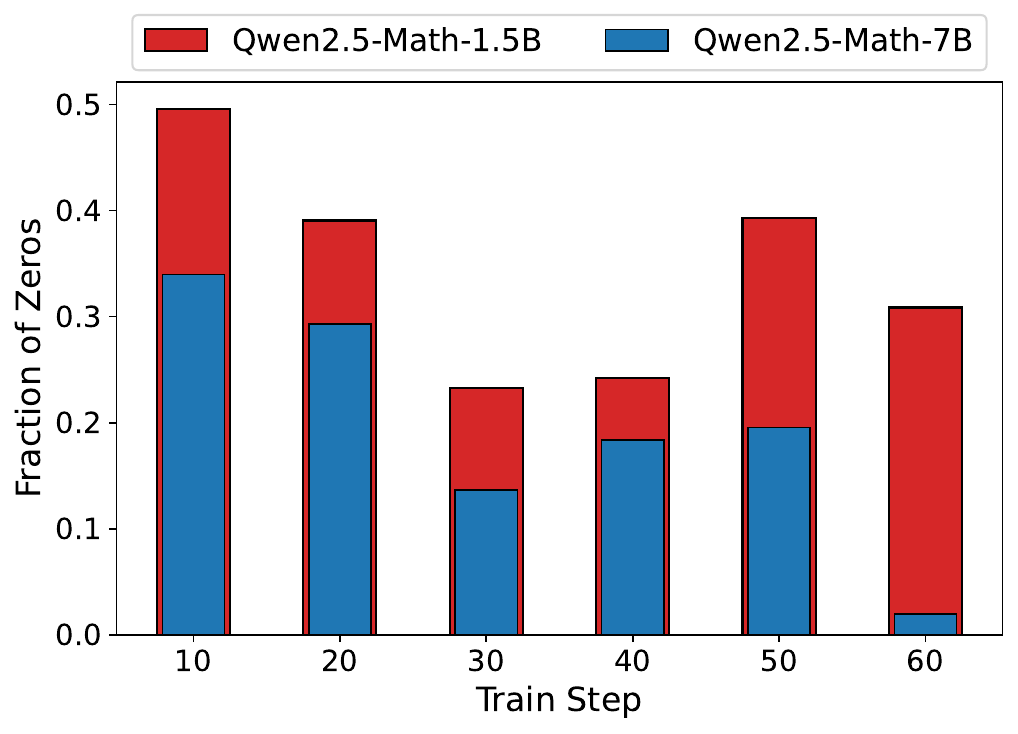}
    \caption{Fraction of zero accuracy rewards on the training iterations on the MATH dataset~\citep{hendrycks2021measuring} for the Qwen2.5-Math-1.5B and Qwen2.5-Math-7B models. The smaller model has a higher fraction of zero rewards, and hence learns slower across training iterations.
    }
    \label{fig:frac_zeros}
\end{figure}

In this work, we mitigate these issues with CoDistill-GRPO, a co-distillation recipe that simultaneously trains a large and a small model by maximizing carefully designed GRPO objectives. The two models learn from each other: the small model uses an on-policy KD reward to learn from the large model's distribution, while the large model is updated using rollouts generated solely by the small model with importance reweighting. 
Switching the generation phase to use only the smaller model significantly reduces training overhead, since rollout generation is the primary bottleneck in GRPO~\citep{liu2026specrlacceleratingonpolicyreinforcement, zheng2025act}. Interestingly,
we show that the on-policy KD reward is largely sufficient for the small model to produce outputs suitable for updating both models. 
To further improve the quality and diversity of small model rollouts, we take advantage of the fact that small model rollouts are inexpensive and increase the number of small model generations. Following \cite{xu2025rolloutsusefuldownsamplingrollouts}, the set of generations is then downsampled based on a combined accuracy and KD score to obtain enhanced relative advantage signals.

Across both Qwen and Llama model families, we demonstrate that CoDistill-GRPO substantially improves small model performance on mathematical reasoning benchmarks, including Minerva, MATH500, AMC2024, and OlympiadBench. For example, we show that Qwen2.5-Math-1.5B can achieve an accuracy of 32\% on the Minerva dataset, a 11.6 percentage point increase over the base model and over 6 percentage points improvement over GRPO. We also theoretically show that CoDistill-GRPO enjoys several statistical guarantees, such as unbiasedness of the gradient estimate.
Interestingly, the large model produced by CoDistill-GRPO nearly matches standard GRPO performance despite training on small model rollouts, achieving an 18\% improvement in training speed — highlighting that CoDistill-GRPO can potentially be used as a cost-effective alternative for  larger models.

\paragraph{Organization.} In \Cref{sec:background}, we provide a brief background on GRPO, along with a list of relevant work. In \Cref{sec:method}, we describe each step of CoDistill-GRPO, along with a justification of our design choices. Finally, in \Cref{sec:experiments} we provide experimental results showcasing CoDistill-GRPO, and in \Cref{sec:conclusion}, we conclude with a summary and directions for future work.

\begin{figure*}[t!]
    \centering
    \includegraphics[width=0.975\linewidth]{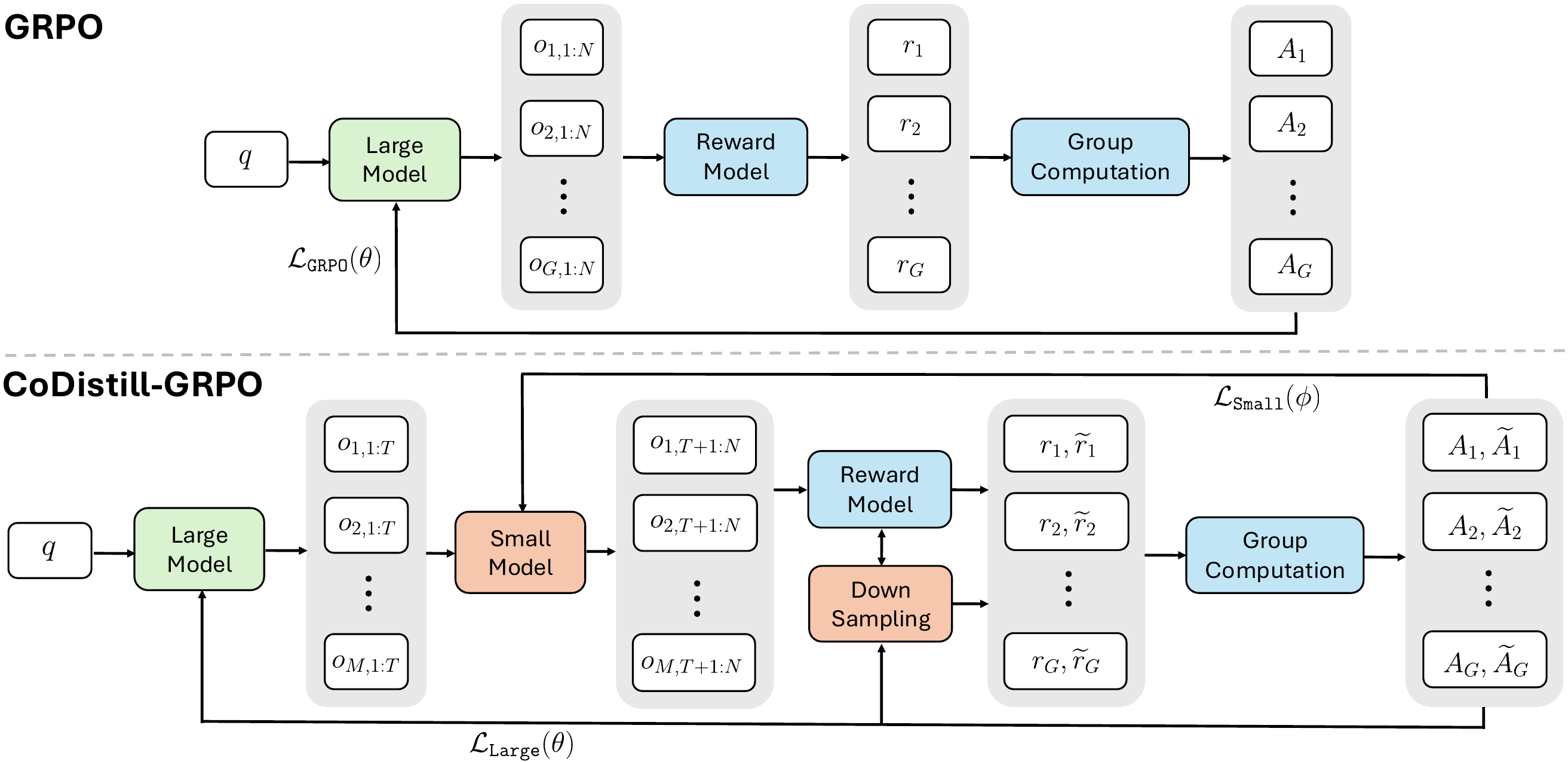}
    \caption{Flowchart of CoDistill-GRPO compared to GRPO. CoDistill-GRPO jointly trains a large and small model, reducing training cost by using rollouts generated by the small model. The large model provides initial hint tokens that the small model completes, and the resulting sequences are scored by a reward model. Because small-model rollouts are inexpensive, we generate many candidates ($M \gg G$) and apply downsampling. The large model’s log probabilities are used both to compute effective KD rewards for the small model and to score rollouts during downsampling. The small model is updated using  advantages $\widetilde{A}_i$ (see \Cref{eqn:small_advantage}), while the large model is updated using the original $A_i$.}
    \label{fig:hybrid_grpo}
\end{figure*}

\section{Background and Related Work}
\label{sec:background}

\subsection{Group Relative Policy Optimization}

Following the form in \Cref{eqn:rlvr}, the goal of GRPO is to find a policy that maximizes the following objective function:
\begin{align}
&\mathcal{L}_{\text{GRPO}}(\theta) = \mathbb{E}_{q \sim P(Q), \{o_i\}_{i=1}^G \sim \pi_{\theta_{\text{old}}}(\cdot | q)}\\
&\left[\frac{1}{G}\sum_{i=1}^G \frac{1}{|o_i|} \sum_{t=1}^{|o_i|}  \min \left\{ \frac{\pi_{\theta}(o_{i,t}|q, o_{i, <t})}{\pi_{\theta_{\text{old}}}(o_{i,t}|q, o_{i, <t})}  \widehat{A}_{i}, \text{clip}_{\epsilon}\left(\frac{\pi_{\theta}(o_{i,t}|q, o_{i, <t})}{\pi_{\theta_{\text{old}}}(o_{i,t}|q, o_{i, <t})}\right) \widehat{A}_{i} \right\} - \beta \cdot  \mbb{D}_{\text{KL}}[\pi_{\theta}(\cdot | q) \, \| \, \pi_{\text{ref}}(\cdot | q)] \right],
\label{eqn:grpo}
\end{align}
where $\pi_{\theta}$ and $\pi_{{\theta}_{\text{old}}}$ are the current and old policy models, respectively; $q$ is a question sampled from the dataset, and each $o_i$ is an output (or rollout) generated from the old policy $\pi_{{\theta}_{\text{old}}}$. Similar to the PPO objective~\citep{ppo}, there is a clipping operator introduced for the importance sampling ratio, where $\mathrm{clip}_{\epsilon}(x)$ outputs $x$ if $1-\epsilon \leq x \leq 1+\epsilon$, $1-\epsilon$ if $x < 1-\epsilon$, and $1+\epsilon$ if $x > 1+\epsilon$ for some $\epsilon \geq 0$. However, different from PPO, the advantages $\widehat{A}_{i}$ are computed by taking the normalized rewards across all generated $G$ outputs, as opposed to using generalized advantage estimation (GAE). Specifically, the advantages are computed as follows:
\begin{align} \label{eqn:advantage}
     \widehat{A}_i = \frac{r_i - \mathrm{mean}(\mbf{r})}{\mathrm{std}(\mbf{r})},
\end{align}
where $r_i$ is the reward for the output $o_i$ and $\mbf{r} = \{r_1, \ldots, r_G\}$ are the aggregated rewards across all $G$ rollouts. The KL divergence term is typically estimated using the following unbiased estimator:
\begin{align}
\label{eqn:kl_div}
   \widehat{\mbb{D}}_{\text{KL}}[\pi_{\theta}(o_i | q) \, \| \, \pi_{\text{ref}}(o_i | q)] = \frac{\pi_{\mathrm{ref}}(o_{i}|q)}{\pi_{\theta}(o_i | q)} - \log \left(\frac{\pi_{\mathrm{ref}}(o_{i}|q)}{\pi_{\theta}(o_{i} | q)}\right) -1.
\end{align}

\subsection{Related Work}

\paragraph{Improving GRPO.} There is a growing body of work aimed at improving GRPO~\citep{bnpo, liu2025understanding, gspo, dapo, shrivastava2025samplethinklessgroup, xu2025rolloutsusefuldownsamplingrollouts, zhang2025gvpo, cispo, zhao2026geometricmean}. CoDistill-GRPO can be combined with these methods, as our contribution primarily provides an algorithmic framework for co-distillation rather than a direct modification of GRPO. On the other hand, there are two more closely related works on improving GRPO for small models:
BREAD~\citep{zhang2025bread} and KDRL~\citep{kdrl}. As previously stated, both algorithms start from the assumption that a larger, more powerful language model is available to assist a smaller model.
In contrast to BREAD, CoDistill-GRPO starts with the large model generating a fixed number of initial tokens as hints for each rollout. This procedure is applied only during the early iterations to help align the two model distributions, enabling rollouts from the small model to be used to update the large model. 
KDRL shares the closest structural similarity to our algorithm and serves as our main comparison. It uses a KD-based method to improve small model performance while also maximizing the GRPO objective. However, \cite{kdrl} keeps the large model fixed: we show that jointly training models via CoDistill-GRPO yields better performance than merely applying GRPO with a KD-based reward from a static teacher model.


\paragraph{Knowledge Distillation.} KD is an effective method for reducing the computational overhead of LLMs by distilling knowledge from large models into smaller ones~\citep{hinton2015distillingknowledgeneuralnetwork, kim-rush-2016-sequence, agarwal2024onpolicy, gu2024minillm, xu2024surveyknowledgedistillationlarge, pmlr-v97-phuong19a, pmlr-v139-menon21a, kaplun2022knowledge, nagarajan2023on, rawat2024littlehelpgoeslong}. This is typically achieved by minimizing the KL divergence between the probability distributions of the student and teacher models. Our work is inspired by the on-policy distillation framework for LLMs proposed by \cite{agarwal2024onpolicy}, and similarly by \cite{lu2025onpolicydistillation}, who introduce an optimization objective that maximizes a reward augmented with an on-policy KL term for training a smaller student model. We remark that there also exists a line of work on ``self-distillation''~\citep{snell2022learningdistillingcontext, yuan2025selfrewardinglanguagemodels, allen-zhu2023towards}, but our goal is to directly use two different models to improve the performance of the smaller model.

\section{CoDistill-GRPO: An Efficient Co-Distillation Recipe for GRPO}
\label{sec:method}

In this section, we detail each component of CoDistill-GRPO, as illustrated by the flowchart in \Cref{fig:hybrid_grpo}. Throughout, we use $\theta$ and $\phi$ to denote the parameters of the large and small models, respectively, with $\pi_\theta$ and $\pi_\phi$ representing their corresponding policies.

Before diving into the specifics, we provide a high-level walkthrough of the algorithm. First, CoDistill-GRPO samples a batch of prompts $\mc{D}_B$. Each prompt $q \in \mc{D}_B$ is fed into the large model to generate $M$ partial outputs, each containing a maximum of $T$ tokens.
These serve as ``hints'' that are fed into the small model, which completes each sequence to a total length of $N \gg T$ tokens. Next, these rollouts are assigned two rewards: a standard reward $r$ for the large model, and an effective reward $\widetilde{r}$ for the small model, which incorporates a KD-based reward (see \Cref{eqn:small_reward}). Using $\widetilde{r}$ as a scoring metric, we downsample the rollouts from $M$ to $G$ to choose the most suitable outputs for updating both $\theta$ and $\phi$. Finally, after computing the advantages, the large model is updated using the standard rewards $r$ via the loss $\mc{L}_{\texttt{Large}}(\theta)$, while the small model is updated using the effective rewards $\widetilde{r}$ via $\mc{L}_{\texttt{Small}}(\phi)$. This entire process is summarized in \Cref{alg:algorithm}.

\Cref{sec:algorithm_breakdown} details each of these components, including the explicit formulations of $\mc{L}_{\texttt{Large}}(\theta)$ and $\mc{L}_{\texttt{Small}}(\phi)$, as well as the downsampling mechanism. Finally, \Cref{sec:theory} provides theoretical guarantees regarding the small model loss and its usage of the on-policy KD reward.

\begin{algorithm}[t!]
\caption{CoDistill-GRPO: Jointly Training Models via Co-Distillation}
\begin{algorithmic}[1]
\REQUIRE Initial policies $\pi_\phi$ and $\pi_\theta$; Prompts $\mathcal{D}$; Batch size $B$;  KD hyperparameter $\alpha$; Train steps $K$; Rollout sizes $M$ (before downsampling), $G$ (after downsampling); Hint size $T$; Hint steps $H$ 
\FOR{$k = 1, \ldots, K$}
    \STATE Sample a batch $\mathcal{D}_B$ from $\mathcal{D}$
\IF{$k \in [H]$}
    \STATE Sample hinted rollouts from large policy: $\{o_{i, 1:T}\}_{i=1}^M \sim \pi_\theta(\cdot|q)$ for each prompt $q \in \mathcal{D}_B$
    \STATE Complete rollouts with small policy: $\{o_{i, T+1:N}\}_{i=1}^M \sim \pi_\phi(\cdot | q, o_{i, 1:T})$ for each prompt $q \in \mathcal{D}_B$
\ELSE
    \STATE Sample rollouts with small policy: $\{o_{i, 1:N}\}_{i=1}^M \sim \pi_\phi(\cdot | q)$ for each prompt $q \in \mathcal{D}_B$
\ENDIF
    \STATE Compute rewards $\{r(o_i, q), \widetilde{r}(o_i, q)\}_{i=1}^M$ for every sampled response $o_i$ and prompt $q \in \mc{D}_B$ using \Cref{eqn:small_reward}
    \STATE Downsample using $\widetilde{r}$ as a scoring metric to obtain $\{r(o_i, q), \widetilde{r}(o_i, q)\}_{i=1}^G$
    \STATE Compute joint loss $\mc{L}(\theta, \phi) \coloneqq \mc{L}_{\texttt{Large}}(\theta) + \mc{L}_{\texttt{Small}}(\phi)$
    \STATE Take gradient $\nabla_{\theta, \phi}\mc{L}(\theta, \phi) = \nabla_{\theta}\mc{L}_{\texttt{Large}}(\theta) + \nabla_{\phi}\mc{L}_{\texttt{Small}}(\phi)$ and update policies $\pi_\theta$ and $\pi_\phi$
\ENDFOR
\RETURN Large policy $\pi_\theta$ and small policy $\pi_\phi$
\end{algorithmic}
\label{alg:algorithm}
\end{algorithm}

\subsection{The CoDistill-GRPO Components: Losses, Downsampling, and Design Choices}
\label{sec:algorithm_breakdown}

\subsubsection{GRPO Objective for the Small Model}
\label{sec:small_grpo_obj}

We begin with the GRPO objective function for the small model, $\mc{L}_{\texttt{Small}}(\phi)$. 
For simplicity, we start by introducing the case where the hinting length is $T = 0$; that is, the small model generates the entire set of outputs from some model $\pi_{\phi_{\text{old}}}$, and no hints are provided. The objective $\mc{L}_{\texttt{Small}}(\phi)$ has a very similar structure to \Cref{eqn:grpo}, where the differences are highlighted in \textcolor{blue}{blue}:
\begin{align}
&\mathcal{L}_{\text{Small}}(\phi) = \mathbb{E}_{q \sim P(Q), \textcolor{black}{\{o_i\}_{i=1}^G \sim \pi_{\phi_{\text{old}}}(\cdot | q)}} \\
&\left[\frac{1}{G}\sum_{i=1}^G \frac{1}{N}\sum_{t=1}^{N}  \min \left\{ \frac{\pi_{\phi}(o_{i,t}|q, o_{i, <t})}{\pi_{\phi_{\text{old}}}(o_{i,t} | q, o_{i, <t})}  \textcolor{blue}{\widetilde{A}_{i,t}}, \text{clip}_{\epsilon}\left(\frac{\pi_{\phi}(o_{i,t}|q, o_{i, <t})}{\pi_{\phi_{\text{old}}}(o_{i,t} | q, o_{i, <t})}\right) \textcolor{blue}{\widetilde{A}_{i,t}} \right\}\right],
\label{eqn:small_obj}
\end{align}
\begin{align}
\label{eqn:small_reward}
     \widetilde{r}_i &\coloneqq \widetilde{r}(q, o_{i}) = \underbrace{r(q, o_{i})}_{\text{original reward}}  + \underbrace{\alpha \cdot \frac{1}{N}\sum_{t=1}^{N}\log\left(\frac{\pi_{\theta}(o_{i,t}|q, o_{i, <t})}{\pi_{\phi}(o_{i,t} | q, o_{i, <t})}\right)}_{\text{on-policy KD reward}}, \\
     \widetilde{A}_{i, t} &\coloneqq \widetilde{A}_i = \widetilde{r}_i - \mathrm{mean}\left(\widetilde{\mbf{r}} \right), \label{eqn:small_advantage}
\end{align}
$\alpha > 0$ is a hyperparameter and $N > 0$ denotes the number of maximum completion tokens. 
The only difference lies in the rewards (and hence advantages): in addition to the original reward (e.g., accuracy for mathematical problems), the small model considers an additional reward inspired by on-policy KD \citep{agarwal2024onpolicy}. This additional reward incentivizes the small model to generate outputs that the large model would also generate, while penalizing those that the large model would not. This reward drives the probability distribution of the small model closer to that of the large model, achieving two goals: (i) improving the performance of the small model and (ii) promoting the use of the small model's rollouts to update the large model. 

The derivation of the effective reward $\widetilde{r}_i$ is also intuitive: if we consider the GRPO objective in \Cref{eqn:grpo} as a ``reward maximization'' problem with a KL regularization term between the small and large model, then we have the following:
\begin{align}
    \underset{\phi}{\mathrm{max}} \,\, &\mbb{E}_{q \sim P(Q)}  \left[ \mbb{E}_{o_i \sim \pi_{\phi}} \left[r(q, o_{i})\right] - \alpha \cdot  \mbb{D}_{\text{KL}}[\pi_{\phi}(\cdot | q) \| \pi_{\theta}(\cdot | q) ]  \right]\\
    = \underset{\phi}{\mathrm{max}} \,\,  &\mbb{E}_{q \sim P(Q), o_i \sim \pi_{\phi}} \biggl[r(q, o_{i}) -\alpha \cdot \log\left(\frac{\pi_{\phi}(o_{i}|q)}{\pi_{\theta}(o_{i} | q)}\right) \biggr].
\end{align}
Then, by averaging over all $N$ tokens, we obtain the effective reward in \Cref{eqn:small_reward} for each $o_i$. Following \cite{liu2025understanding}, we remove the standard deviation when computing advantages and normalize with $N$ instead of $|o_i|$, which has been shown to mitigate length bias and improve final performance.

\paragraph{Discussion on Design Choices.} There are a few design choices here that are worthy of discussion. First, notice that we use a different unbiased estimate of the KL divergence than in \Cref{eqn:kl_div} for the reward computation. We find that using the estimator in \Cref{eqn:kl_div} makes training unstable. Second, we incorporated the on-policy KD term into the reward instead of directly adding it to the objective function as done in \Cref{eqn:grpo} since we find that it generally yields better performance for the same values of $\alpha$. Finally, we do not include a regularizer from a reference policy, i.e., we set $\beta = 0$ in \Cref{eqn:grpo}. Prior work has shown that omitting this term often leads to better performance~\citep{liu2025understanding, hu2025openreasonerzeroopensourceapproach}, and we also aim for the small model to deviate from its initialization so that it can achieve sufficient improvement through GRPO.


\paragraph{Importance Reweighting with Hinting Tokens.} When $T>0$, the large model generates an initial hint with $T$ tokens. We have two options for importance sampling: (i) perform importance sampling on all tokens respectively as such:
\begin{align}
&\underbrace{
\sum_{t=1}^{T} 
\min \biggl(
  \frac{\pi_{\phi}(o_{i,t}\mid q, o_{i,<t})}{\pi_{\theta_{\text{old}}}(o_{i,t}\mid q, o_{i,<t})}
  \,\widetilde{A}_{i,t},
  \text{clip}_{\epsilon}\biggl(
    \frac{\pi_{\phi}(o_{i,t}\mid q, o_{i,<t})}{\pi_{\theta_{\text{old}}}(o_{i,t}\mid q, o_{i,<t})}
  \biggr)\widetilde{A}_{i,t}
\biggr)
}_{\text{importance sampling on initial tokens from large model}}
\\
&\quad\quad\quad\quad\quad\quad+
\underbrace{
\sum_{t=T+1}^{|o_i|}
\min \biggl(
  \frac{\pi_{\phi}(o_{i,t}\mid q, o_{i,<t})}{\pi_{\phi_{\text{old}}}(o_{i,t}\mid q, o_{i,<t})}
  \,\widetilde{A}_{i,t},
  \text{clip}_{\epsilon}\biggl(
    \frac{\pi_{\phi}(o_{i,t}\mid q, o_{i,<t})}{\pi_{\phi_{\text{old}}}(o_{i,t}\mid q, o_{i,<t})}
  \biggr)\widetilde{A}_{i,t}
\biggr)
}_{\text{importance sampling on completed tokens from small model}},
\label{eqn:small_sampling}
\end{align}
or (ii) ignore importance sampling on the hint tokens and perform importance sampling solely on those produced by the small model, which would amount to just using \Cref{eqn:small_sampling}. The second option is what is similarly used in the objective function of BREAD~\citep{zhang2025bread}.
However, we find that the second option generally makes training unstable, while the first option does not perform as well as treating these initial tokens as if they were generated by the small model. Hence, even for $T>0$, we use the objective in \Cref{eqn:small_obj}.

Furthermore, we also observe that providing these initial tokens as hints for \emph{all} iterations leads to instability. We hypothesize that this occurs because the small model often fails to complete rollouts correctly, which produces large gradients and results in unstable training. For this reason, we provide hints only during the early iterations as a warm start for the small model. We generally find that providing hints during the first epoch is sufficient to enhance rollout quality while maintaining overall algorithm stability.

\subsubsection{GRPO Objective for the Large Model}
\label{sec:large_model_obj}

The GRPO objective for the large model $\mc{L}_{\texttt{Large}}(\theta)$ is straightforward: it must account for the fact that rollouts are generated by the small model $\pi_{\phi_{\text{old}}}$. We handle this by treating the problem as an off-policy RL problem and applying importance sampling with respect to the small model's token-probability distribution, which leads to the following objective:
\begin{align}
&\mathcal{L}_{\text{Large}}(\theta) = \mathbb{E}_{q \sim P(Q), \textcolor{blue}{\{o_i\}_{i=1}^G \sim \pi_{\phi_{\text{old}}}(\cdot | q)}} \\
&\left[\frac{1}{G}\sum_{i=1}^G \frac{1}{N}\sum_{t=1}^{N}  \min \left\{ \frac{\pi_{\theta}(o_{i,t}|q, o_{i, <t})}{\textcolor{blue}{\pi_{\phi_{\text{old}}}(o_{i,t} | q, o_{i, <t})}} \hat{A}_{i,t}, \text{clip}_{\epsilon}\left(\frac{\pi_{\theta}(o_{i,t}|q, o_{i, <t})}{\textcolor{blue}{\pi_{\phi_{\text{old}}}(o_{i,t} | q, o_{i, <t})}}\right) \hat{A}_{i,t} \right\} \right].
\label{eqn:large_obj}
\end{align}
\Cref{eqn:large_obj} has the same structure as the GRPO objective in \Cref{eqn:grpo}, but now accounts for the rollouts generated via the small model. Similar to \Cref{sec:small_grpo_obj}, we omit the regularizer with the reference model and normalize with $N$ instead of $|o_i|$.

\subsubsection{Downsampling Rollouts}
\label{sec:rejection}

Lastly, we discuss the downsampling block in \Cref{fig:hybrid_grpo}. The main idea is as follows: since all rollouts are generated using the more inexpensive small model, we can produce $M > G$ rollouts and then select a subset of $G$ rollouts that will likely lead to the most informative  updates. 
\citet{xu2025rolloutsusefuldownsamplingrollouts} show that choosing the rollouts with the top and bottom $G/2$ rewards yields better performance than selecting the top-$G$ rollouts with the highest rewards. Intuitively, this maximizes reward variance and exposes the model to both strong and weak trajectories. Similarly, CoDistill-GRPO selects the top and bottom $G/2$ rollouts based on the effective reward in \Cref{eqn:small_reward}, which incorporates the KD score, as these selected rollouts are also used to update the large model.

\subsection{Theoretical Results}
\label{sec:theory}

Here, we present a couple of theoretical results regarding the small model loss $\mc{L}_{\texttt{Small}}(\phi)$. First, although the small model uses an effective reward that incorporates the on-policy KD term, we show that it still enjoys desirable statistical properties such as an unbiased gradient estimate.
\begin{lemma}
\label{lem:unbiased}
    Let $\eta > 0$ and $G>1$ denote the learning rate and group size, respectively. 
    Suppose that for each prompt $q \sim P(Q)$, the outputs are sampled on-policy $\{o_i\}_{i=1}^G \sim \pi_{\phi}(\cdot \, | \, q)$  
    and $\pi_\phi$ is continuously differentiable.
    If the policy $\pi_\phi$ is updated using the effective learning rate $\eta_{\text{eff}} \coloneqq \left(\frac{G}{G-1}\right) \cdot \eta$ with advantage estimates in \Cref{eqn:small_reward}, then the gradient of \Cref{eqn:small_obj} is an  unbiased estimate of the true policy gradient:
    \begin{align}
        \eta_{\text{eff}} \cdot \mbb{E}\left[\nabla_{\phi} \mc{L}_{\texttt{Small}}(\phi)\right] = \eta \cdot \nabla_{\phi} J(\phi),
    \end{align}
where $J(\phi) \coloneqq \mbb{E}_{q\sim P(Q), o_i\sim \pi_{\phi}(\cdot | q)} \left[ \widetilde{r}_{\phi}(q, o_i)\right]$ is the true reward objective.
\end{lemma}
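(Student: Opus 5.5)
Because the rollouts are on-policy, $\phi_{\text{old}}=\phi$ at the moment the gradient is taken. Every ratio $\pi_\phi/\pi_{\phi_{\text{old}}}$ therefore equals $1$. The clip is inactive, and its derivative is just the score function: $\nabla_\phi(\pi_\phi/\pi_{\phi_{\text{old}}})=\nabla_\phi\log\pi_\phi(o_{i,t}\mid q,o_{i,<t})$. Summing over tokens with the $1/N$ normalization, and using that $\widetilde{A}_{i,t}=\widetilde{A}_i$ is constant in $t$, the per-prompt gradient becomes
\begin{align}
\nabla_\phi \mc{L}_{\texttt{Small}}(\phi) = \frac{1}{GN}\sum_{i=1}^G \widetilde{A}_i\, \nabla_\phi \log \pi_\phi(o_i\mid q).
\end{align}
I will absorb the constant $1/N$ into the reward scale, or equivalently into the comparison with $J$; the same normalization convention is used in the definition of $\widetilde{r}$. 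I will treat $\widetilde{r}_i$ as a fixed reward when differentiating, i.e.\ as stop-gradient, as the GRPO implementation does. The ``true policy gradient'' is then $\nabla J(\phi)=\mbb{E}[\widetilde{r}(q,o)\nabla\log\pi_\phi(o\mid q)]$, with $\widetilde{r}_\phi$ held fixed inside the score-function identity. Continuous differentiability of $\pi_\phi$ is what justifies exchanging $\nabla$ and $\mbb{E}$ here.

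The core step is the leave-one-out decomposition of the mean-centred advantage. I write
\begin{align}
\widetilde{A}_i=\widetilde{r}_i-\frac{1}{G}\sum_j \widetilde{r}_j=\frac{G-1}{G}\Big(\widetilde{r}_i-\frac{1}{G-1}\sum_{j\neq i}\widetilde{r}_j\Big).
\end{align}
Conditional on $q$, the samples $o_j$ with $j\neq i$ are independent of $o_i$. Hence
\begin{align}
\mbb{E}\big[\widetilde{r}_j\nabla\log\pi_\phi(o_i\mid q)\big]=\mbb{E}[\widetilde{r}_j]\cdot\mbb{E}\big[\nabla\log\pi_\phi(o_i\mid q)\big]=0,
\end{align}
because the score has zero mean. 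The leave-one-out baseline therefore contributes nothing. Each summand then has expectation $\frac{G-1}{G}\,\mbb{E}[\widetilde{r}(q,o)\nabla\log\pi_\phi(o\mid q)]$, and averaging over $i$ and $q$ gives $\mbb{E}[\nabla\mc{L}_{\texttt{Small}}]=\frac{G-1}{G}\nabla J(\phi)$. Multiplying by $\eta_{\text{eff}}=\frac{G}{G-1}\eta$ yields the claim. This is also why the lemma needs $G>1$.

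The main obstacle is the dependence of $\widetilde{r}$ on $\phi$ through the KD term $\log(\pi_\theta/\pi_\phi)$. If $J$ is differentiated literally, there is an extra term $\mbb{E}[\nabla_\phi\widetilde{r}_\phi]=-\frac{\alpha}{N}\sum_t\mbb{E}[\nabla\log\pi_\phi(o_t\mid q,o_{<t})]$. I expect this term to vanish. Each token score has zero conditional expectation given the prefix $o_{<t}$, since $o_t\sim\pi_\phi(\cdot\mid q,o_{<t})$, and the tower property then kills the sum. So the pathwise contribution is zero, and the score-function expression above is exactly $\nabla J$. I will verify this tower-property argument carefully, including the fixed-length-$N$ convention (padding after EOS), since it is what makes the stop-gradient treatment legitimate.
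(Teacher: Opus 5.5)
Your argument is correct and is essentially the paper's. The on-policy ratio reduces to the score function, the pathwise KD term $\mathbb{E}[\nabla_\phi\widetilde{r}_\phi]$ vanishes by the tower property, and independence within the group kills the cross terms to leave the factor $\frac{G-1}{G}$. You organize that cancellation through the leave-one-out (RLOO) rewriting that the paper mentions at the end of its proof. You also put a stop-gradient on $\widetilde{r}_i$ in the loss where the paper differentiates through it. That is harmless: at $\phi=\phi_{\text{old}}$ the extra product-rule term equals $\frac{1}{G}\nabla_\phi\sum_i\widetilde{A}_i=0$ identically.

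The one point to tighten is the $1/N$. It cannot be absorbed into the reward scale, because rescaling $\widetilde{r}$ rescales $\nabla J$ by the same amount. The identity therefore holds only up to that constant, i.e.\ with $\eta_{\text{eff}}=\frac{GN}{G-1}\eta$; the paper's proof silently drops the same factor.
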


Note that \Cref{lem:unbiased} makes two assumptions: the outputs are sampled on-policy, meaning that importance reweighting is not performed; and only $G$ outputs are sampled, meaning that the downsampling procedure is skipped. These assumptions simplify the analysis, allowing us to demonstrate that the algorithm's properties hold based solely on its core components. 
Then, using \Cref{lem:unbiased}, we prove that the gradient of $\mc{L}_{\texttt{Small}}(\phi)$ can be decomposed into two terms: a direction towards the original reward and a direction towards the large model. This shows that when $\alpha > 0$, the effective reward drives the small model towards the large model, explaining why the effective reward improves small model performance beyond standard GRPO.

\begin{theorem}[Gradient Decomposition]
\label{thm:gradient}
    Let $\alpha >0$ and $N > 0$ denote the KD regularization parameter and number of maximum completion tokens, respectively. Under the same conditions as \Cref{lem:unbiased}, for each prompt $q \sim P(Q)$, the expected gradient of $\mc{L}_{\texttt{Small}}(\phi)$ admits the following decomposition:
    \begin{align*}
        \eta_{\text{eff}} \cdot \mbb{E}\left[\nabla_{\phi} \mc{L}_{\texttt{Small}}(\phi) \, | \, q \right] = \eta\cdot \underbrace{\nabla_{\phi} \mbb{E}_{o_i \sim \pi_{\phi}(\cdot | q)}\left[r(q, o_i)\right]}_{\text{direction towards original reward}} - \eta \cdot \frac{\alpha}{N} \underbrace{\nabla_{\phi} \mbb{D}_{\texttt{KL}}\left( \pi_{\phi}(\cdot | q) \| \pi_{\theta}(\cdot | q)\right)}_{\text{direction towards large model}}.
    \end{align*}
\end{theorem}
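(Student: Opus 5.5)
We will reduce the claim to \Cref{lem:unbiased} applied conditionally on a fixed prompt $q$, and then expand the true reward objective $J$ term by term. Since \Cref{lem:unbiased} already gives
\begin{align*}
\eta_{\text{eff}} \cdot \mbb{E}\left[\nabla_{\phi} \mc{L}_{\texttt{Small}}(\phi) \, | \, q\right] = \eta \cdot \nabla_{\phi} \mbb{E}_{o \sim \pi_{\phi}(\cdot | q)}\left[\widetilde{r}_{\phi}(q, o)\right],
\end{align*}
we first check that its proof is pointwise in $q$. The argument there is the usual leave-one-out baseline computation, and it never averages across prompts. So the conditional version holds verbatim, and the outer expectation over $P(Q)$ can simply be dropped. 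Everything then comes down to computing $\nabla_\phi \mbb{E}_{o\sim\pi_\phi(\cdot|q)}[\widetilde{r}_\phi(q,o)]$.

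By \Cref{eqn:small_reward}, $\widetilde r_\phi(q,o) = r(q,o) + \frac{\alpha}{N}\sum_{t=1}^N \log\frac{\pi_\theta(o_t|q,o_{<t})}{\pi_\phi(o_t|q,o_{<t})}$. The per-token log-ratios telescope by the chain rule into the sequence-level log-ratio $\log\frac{\pi_\theta(o|q)}{\pi_\phi(o|q)}$. Here we treat sequences as having $N$ token positions (padding with an absorbing end token), which matches the $1/N$ normalization. Linearity of expectation then gives
\begin{align*}
\mbb{E}_{o\sim\pi_\phi}[\widetilde r_\phi(q,o)] = \mbb{E}_{o\sim\pi_\phi}[r(q,o)] - \frac{\alpha}{N}\,\mbb{E}_{o\sim\pi_\phi}\left[\log\frac{\pi_\phi(o|q)}{\pi_\theta(o|q)}\right] = \mbb{E}_{o\sim\pi_\phi}[r(q,o)] - \frac{\alpha}{N}\,\mbb{D}_{\text{KL}}\left(\pi_\phi(\cdot|q)\,\|\,\pi_\theta(\cdot|q)\right).
\end{align*}
The large model $\theta$ is held fixed when differentiating in $\phi$, so differentiating both sides yields exactly the two claimed directions with coefficients $\eta$ and $-\eta\alpha/N$.

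The main obstacle is how $J(\phi)$ in \Cref{lem:unbiased} treats the $\phi$-dependence inside $\widetilde r_\phi$. The score-function estimator implicit in $\mc{L}_{\texttt{Small}}$ treats the advantage as a constant. Its expectation is therefore $\mbb{E}[\widetilde r_\phi \nabla\log\pi_\phi]$, not $\nabla\mbb{E}[\widetilde r_\phi]$. The two differ by $\mbb{E}_{o\sim\pi_\phi}[\nabla_\phi \widetilde r_\phi(q,o)] = -\frac{\alpha}{N}\mbb{E}_{o\sim\pi_\phi}[\nabla_\phi\log\pi_\phi(o|q)]$, and this vanishes because the score has zero mean, $\sum_o \nabla_\phi\pi_\phi(o|q)=\nabla_\phi 1=0$. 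We would state this identity explicitly, which makes the decomposition independent of whether the lemma is read with the reward frozen or not. Continuous differentiability of $\pi_\phi$, together with finiteness of the output space of length at most $N$, justifies exchanging gradient and expectation. Positivity of $\pi_\theta$ on the support of $\pi_\phi$ is needed for the KL term to be finite, and we would note it as an implicit assumption.
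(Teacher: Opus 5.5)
Your proposal is correct and follows the paper's proof essentially step for step: invoke \Cref{lem:unbiased} conditionally on $q$, combine the per-token log-ratios into the sequence-level log-ratio via the chain rule of probability, identify $\frac{\alpha}{N}\mbb{E}_{o\sim\pi_\phi(\cdot|q)}[\log(\pi_\theta(o|q)/\pi_\phi(o|q))]$ as $-\frac{\alpha}{N}\mbb{D}_{\text{KL}}(\pi_\phi(\cdot|q)\,\|\,\pi_\theta(\cdot|q))$, and differentiate with $\theta$ held fixed. Your zero-mean-score remark, $\mbb{E}_{o\sim\pi_\phi(\cdot|q)}[\nabla_\phi\widetilde r_\phi(q,o)]=0$, is the same identity the paper establishes inside the proof of \Cref{lem:unbiased}, so it (like your padding and support caveats) adds clarity rather than a new ingredient.
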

When $\alpha > 0$, the on-policy KD reward introduces an additional direction towards the large model, whereas with $\alpha = 0$ we would only have the direction towards maximizing the original reward. This is similar in spirit to \cite{kdrl}, but adapted for the effective reward setting and rigorously derived. All proofs are deferred to \Cref{sec:deferred_proofs}.

\section{Experimental Results}
\label{sec:experiments}

\subsection{Experimental Setup}

We evaluate the effectiveness of CoDistill-GRPO on both Qwen and Llama model families across four mathematical benchmarks: Minerva, MATH500, AMC2024, and OlympiadBench. For the Qwen models, we use Qwen2.5-Math-1.5B as the small model and Qwen2.5-Math-7B as the large model. These models were supervised fine-tuned on a mathematical dataset, making them well-suited for mathematical reasoning tasks and consistent with prior work. For the Llama family, we use Llama3.2-1B-Instruct and Llama3.1-8B-Instruct. All models are trained on the MATH dataset~\citep{hendrycks2021measuring}, which consists of 12,500 samples, for a total of 8 epochs for Qwen models and 2 epochs for Llama models.
We consider two verifiable reward functions—accuracy and format rewards—as done by \cite{deepseek-math}. A detailed list of hyperparameters and training configurations is deferred to \Cref{sec:training_dets}.

\begin{table}[t!]
\centering
\resizebox{\textwidth}{!}{
\begin{tabular}{lrrrr|r}
\toprule
Algorithm & Minerva & MATH500 & AMC & OlympiadBench & Average\\
\midrule
Qwen2.5-Math-1.5B & $20.37 \pm 0.02$ & $54.92 \pm 0.01$ & $32.00 \pm 0.05$ & $23.14 \pm 0.01$ & $32.61$ \\
\midrule
+GRPO & $25.88 \pm 0.01$ & $72.12 \pm 0.01$ & $55.50 \pm 0.07$ & $35.05 \pm 0.00$ & $47.14$ (\textcolor{blue}{$+14.53$}) \\
+KDRL w/ 7B & $27.13 \pm 0.02$ & $74.24 \pm 0.01$ & $54.00 \pm 0.05$ & $\mathbf{37.78} \pm 0.02$ & $48.29$ (\textcolor{blue}{$+15.68$}) \\
+CoDistill-GRPO ($\alpha = 1$, $T=0$) & $\underline{29.12} \pm 0.01$ & $\mathbf{75.04} \pm 0.01$ & $54.50 \pm 0.04$ & $\underline{36.39} \pm 0.01$ & $48.76$ (\textcolor{blue}{$+ 16.15$})\\
+CoDistill-GRPO ($\alpha = 2$, $T=0$) & $\mathbf{31.99} \pm 0.02$ & $74.28 \pm 0.01$& $\underline{55.50} \pm 0.06$ & $35.14 \pm 0.01$ &  $49.23$ (\textcolor{blue}{$+ 16.62$})\\
+CoDistill-GRPO  ($\alpha = 1, T=50$) & $28.30\pm 0.01$ & $\underline{74.52}\pm 0.01$& $\mathbf{60.50} \pm 0.03$ & $36.12 \pm 0.02$ &  $49.86$ (\textcolor{blue}{$+ 17.25$})\\

\bottomrule
\end{tabular}
} 
\caption{Results for the small model Qwen2.5-Math-1.5B using CoDistill-GRPO compared to GRPO and KDRL. While all methods improve upon the base model, CoDistill-GRPO yields the largest overall improvement when averaged across datasets. Best results are shown in \textbf{bold}, and second-best results are \underline{underlined}.
}
\label{tab:qwen_small_results}
\end{table}

\begin{figure}[t!]
    \centering
    \includegraphics[width=0.95\linewidth]{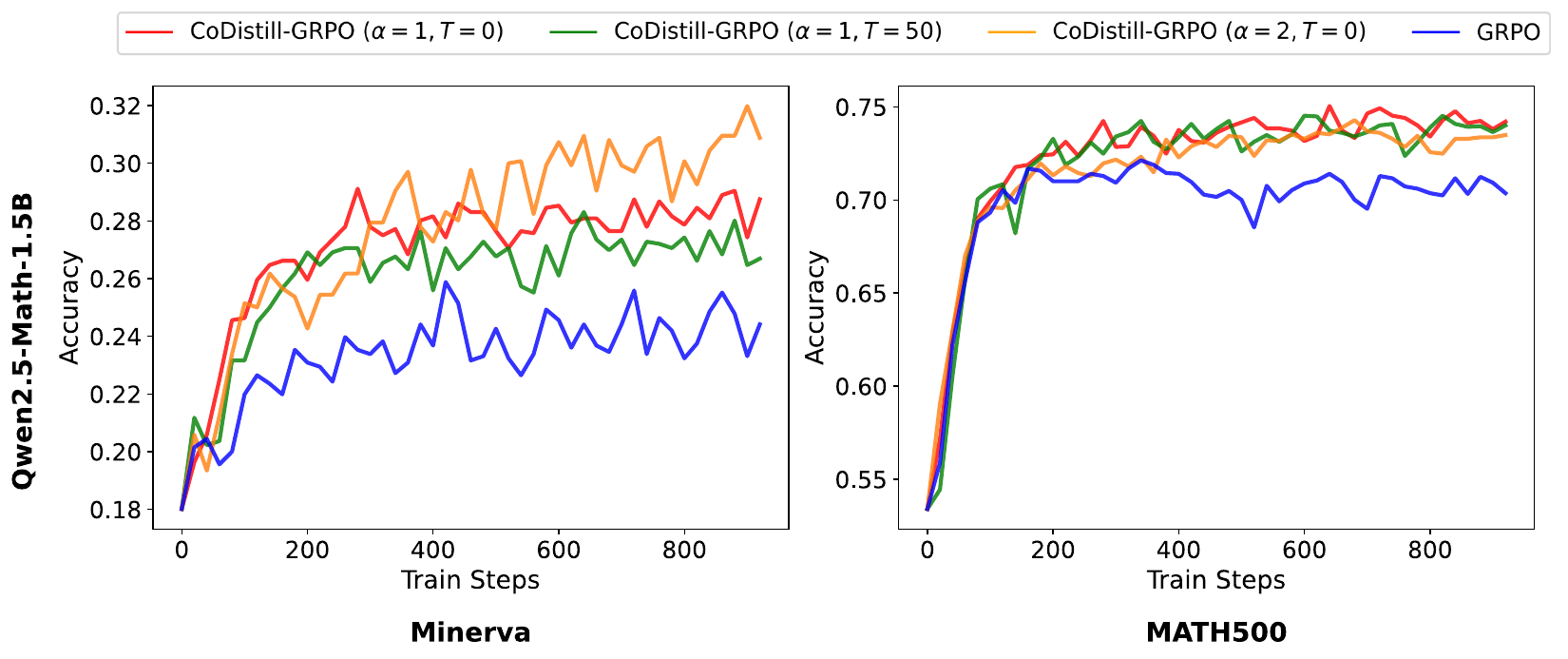}
    \caption{Test accuracy of the Qwen2.5-Math-1.5B model across training iterations on the Minerva and MATH500 datasets. Compared to GRPO, CoDistill-GRPO learns at a faster rate and can achieve a higher final test accuracy.}
    \label{fig:small_minerva_math}
\end{figure}

We compare CoDistill-GRPO, using different values of $\alpha$ and initial hint lengths $T$, against two baselines: GRPO and KDRL~\citep{kdrl}. For KDRL, we first run GRPO on the large model, select several well-performing checkpoints (including the base model), and then train the small model using the effective on-policy KD reward. We report the results with the best performing checkpoint. Note that although KDRL freezes the large model during training, it still requires running GRPO beforehand, making it the most time-consuming method. We also fix $\alpha = 1$ for KDRL, as we empirically found that using a smaller fixed value performed better than using an annealed schedule.
For CoDistill-GRPO, we provide hints, i.e., $T>0$, only during the first epoch and set $T=0$ for all subsequent iterations. For rollout generation, we use downsampling by generating $M = 14$ rollouts and updating with a subset of $G = 8$ rollouts. 
This was also done for KDRL to provide a fair comparison. For GRPO, we generate and update using only $G = 8$ rollouts.

\begin{figure}[h!]
    \centering
    \includegraphics[width=0.425\linewidth]{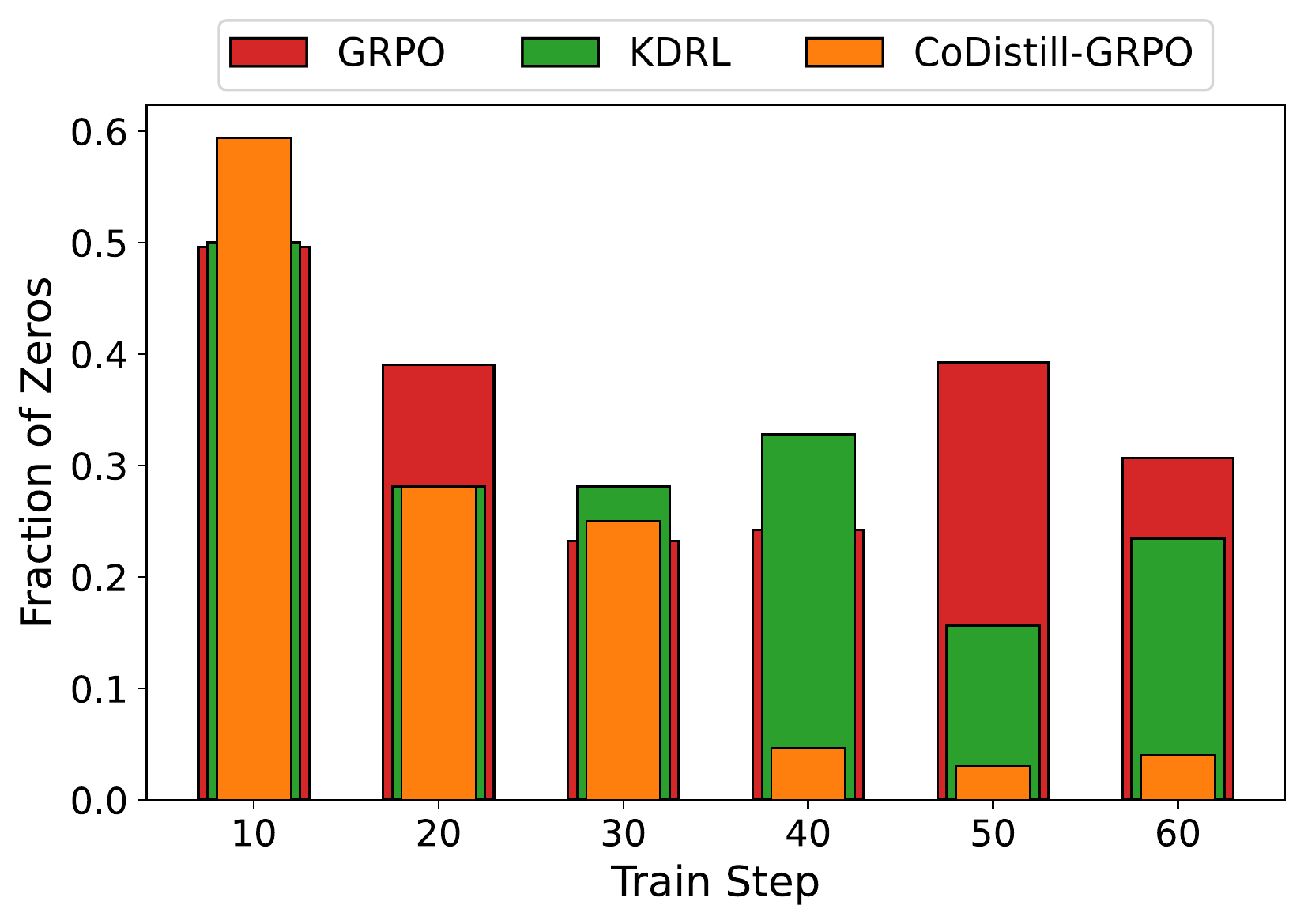}
    \caption{Fraction of zero accuracy rewards on the training dataset across iterations on the MATH dataset~\citep{hendrycks2021measuring} for the Qwen2.5-Math-1.5B. Compared to GRPO and KDRL, CoDistill-GRPO has a lesser fraction of zero accuracy rewards in the earlier iterations.}
    \label{fig:frac_zeros_distill}
\end{figure}

\subsection{Results on Qwen Models}

In \Cref{tab:qwen_small_results}, we present quantitative results on mathematical benchmarks for the Qwen2.5-Math-1.5B model. We report mean accuracy and standard deviation across five random seeds. While all methods improve upon the base model, CoDistill-GRPO achieves a notable improvement over GRPO with the small model alone, with an increase of approximately $2.7$ percentage points on average accuracy. 
The small model trained with CoDistill-GRPO also reaches a remarkable $32\%$ accuracy on the Minerva dataset as opposed to $26\%$ with GRPO.
On the other hand, while KDRL improves upon standard GRPO, its gains are marginal, yielding an average performance increase of $1.15$ percentage point. We hypothesize that CoDistill-GRPO outperforms KDRL because jointly updating both models mitigates the previously discussed discrepancy between the large teacher and small student models. We also remark that while CoDistill-GRPO with large model hints (i.e., $T>0$) achieves the best average performance increase, there are datasets where it does not improve upon the $T=0$ baseline. This implies that hinting is not universally beneficial, as the small model is often unable to complete the initial rollouts generated by the large model correctly. We leave the investigation of optimal hinting strategies to enhance small model rollouts for future work.

In \Cref{fig:small_minerva_math}, we show the evolution of test accuracy on Minerva and MATH500 throughout training, and defer the AMC and OlympiadBench plots to \Cref{sec:add_figures}.
Referring back to \Cref{fig:frac_zeros}, we show in \Cref{fig:frac_zeros_distill} that, compared to GRPO and KDRL, CoDistill-GRPO yields a smaller fraction of zero-accuracy rewards. We attribute this reduction to the  gains observed in the quantitative results.  Finally, in \Cref{sec:ablation}, we provide additional ablation studies using Qwen models. These ablation studies include several settings in which CoDistill-GRPO was unstable (e.g., providing more hints, or using the KL divergence in \Cref{eqn:kl_div}).

\subsection{Results Beyond Qwen Models}
\label{sec:llama_results}

We also showcase CoDistill-GRPO on Llama models. In \Cref{tab:llama_results}, we show that CoDistill-GRPO provides improvements over GRPO. Although both methods outperform the base model, the gains are considerably smaller than those observed with Qwen. This observation is consistent with findings in the literature (see results on Llama models reported by \citet{liu2025understanding}). We also find that training Llama with GRPO is typically unstable, making benchmarking more challenging than with Qwen. In contrast, CoDistill-GRPO appeared substantially more stable and achieved consistent improvements over GRPO. We leave a more thorough investigation to future work.

\begin{table}[ht]
\centering
\resizebox{\textwidth}{!}{
\begin{tabular}{lrrrr|r}
\toprule
Algorithm & Minerva & MATH500 & AMC & OlympiadBench & Average\\
\midrule
Llama3.2-1B-Instruct      & $6.465 \pm 0.55$   & $20.56 \pm 2.20$  & $5.500 \pm 2.45$  & $4.328 \pm 0.22$         & $9.213$          \\
\midrule
+GRPO                & $5.664 \pm 0.95$    & $22.36 \pm 2.47$   & $6.500 \pm 4.06$  & $5.038 \pm 0.57$         & $9.891$ (\textcolor{blue}{$+ 0.678$})          \\
+CoDistill-GRPO ($\alpha = 1$, $T=0$)         &  $6.396 \pm 1.50$   &  $22.30 \pm 1.26$  & $10.50 \pm 1.87$ & $5.086\pm 0.45$     &   $11.07$ (\textcolor{blue}{$+ 1.857$})      \\  
\bottomrule
\end{tabular}
}
\caption{Results for the small model Llama3.2-1B-Instruct using CoDistill-GRPO compared to baselines on Minerva, MATH500, AMC2024, and OlympiadBench. CoDistill-GRPO improves upon GRPO when averaged across all benchmarking datasets.}
\label{tab:llama_results}
\end{table}

\begin{table}[t!]
\centering
\resizebox{\textwidth}{!}{
\begin{tabular}{lrrrr|r}
\toprule
Algorithm & Minerva & MATH500 & AMC & OlympiadBench & Average\\
\midrule
Qwen2.5-Math-7B & $24.04 \pm 0.01$ & $66.12 \pm 0.01$ & $47.00 \pm 0.04$ & $34.19 \pm 0.01$ & $42.84$ \\
\midrule
+GRPO & $\mathbf{38.75} \pm 0.01$ & $80.12 \pm 0.01$ & $\mathbf{70.00} \pm 0.01$ & $\underline{42.73} \pm 0.01$ & $57.90$ (\textcolor{blue}{$+ 15.06$}) \\
+CoDistill-GRPO ($\alpha=1$, $T=0$) & $34.56 \pm 0.01$ & $77.44 \pm 0.01$ & $64.00 \pm 0.07$ & $38.90 \pm 0.01$ & $53.73$ (\textcolor{blue}{$+10.89$}) \\
+CoDistill-GRPO ($\alpha=1$, $T=0$) w/ CT & $\underline{38.16} \pm 0.01$ & $79.04 \pm 0.00$ & $\underline{69.50} \pm 0.03$ & $42.40 \pm 0.01$ & $57.27$ (\textcolor{blue}{$+ 14.43$})\\
+CoDistill-GRPO ($\alpha=1$, $T=50$) & $35.51 \pm 0.01$ & $77.04 \pm 0.01$ & $62.00 \pm 0.02$ & $40.41 \pm 0.01$ & $53.74$ (\textcolor{blue}{$+10.90 $})\\
+CoDistill-GRPO ($\alpha=1$, $T=50$) w/ CT & $37.28 \pm 0.02$ & $\mathbf{81.12} \pm 0.01$ & $63.50 \pm 0.05$ & $\mathbf{42.96} \pm 0.01$ & $56.22$ (\textcolor{blue}{$+13.38 $}) \\
+CoDistill-GRPO ($\alpha=2$, $T=0$) & $35.15 \pm 0.01$ & $76.52 \pm 0.01$ & $62.00 \pm 0.07$ & $38.84 \pm 0.01$ & $53.13$ (\textcolor{blue}{$+10.29 $})\\
+CoDistill-GRPO ($\alpha=2$, $T=0$) w/ CT & $36.00 \pm 0.01$ & $\underline{80.20} \pm 0.00$ & $63.00 \pm 0.02$ & $41.01 \pm 0.01$ & $55.05$ (\textcolor{blue}{$+12.21 $}) \\
\bottomrule
\end{tabular}
}
\caption{Results for the large model Qwen2.5-Math-7B using CoDistill-GRPO compared to baselines on Minerva, MATH500, AMC2024, and OlympiadBench. While all methods improve upon the base model, GRPO yields the largest overall improvement. However, with CT, CoDistill-GRPO can closely match the performance of GRPO while still reducing total training time. Best results are shown in \textbf{bold}, and second-best results are \underline{underlined}.}
\label{tab:qwen_large_results}
\end{table}

\begin{figure}[t!]
    \centering
    \includegraphics[width=0.95\linewidth]{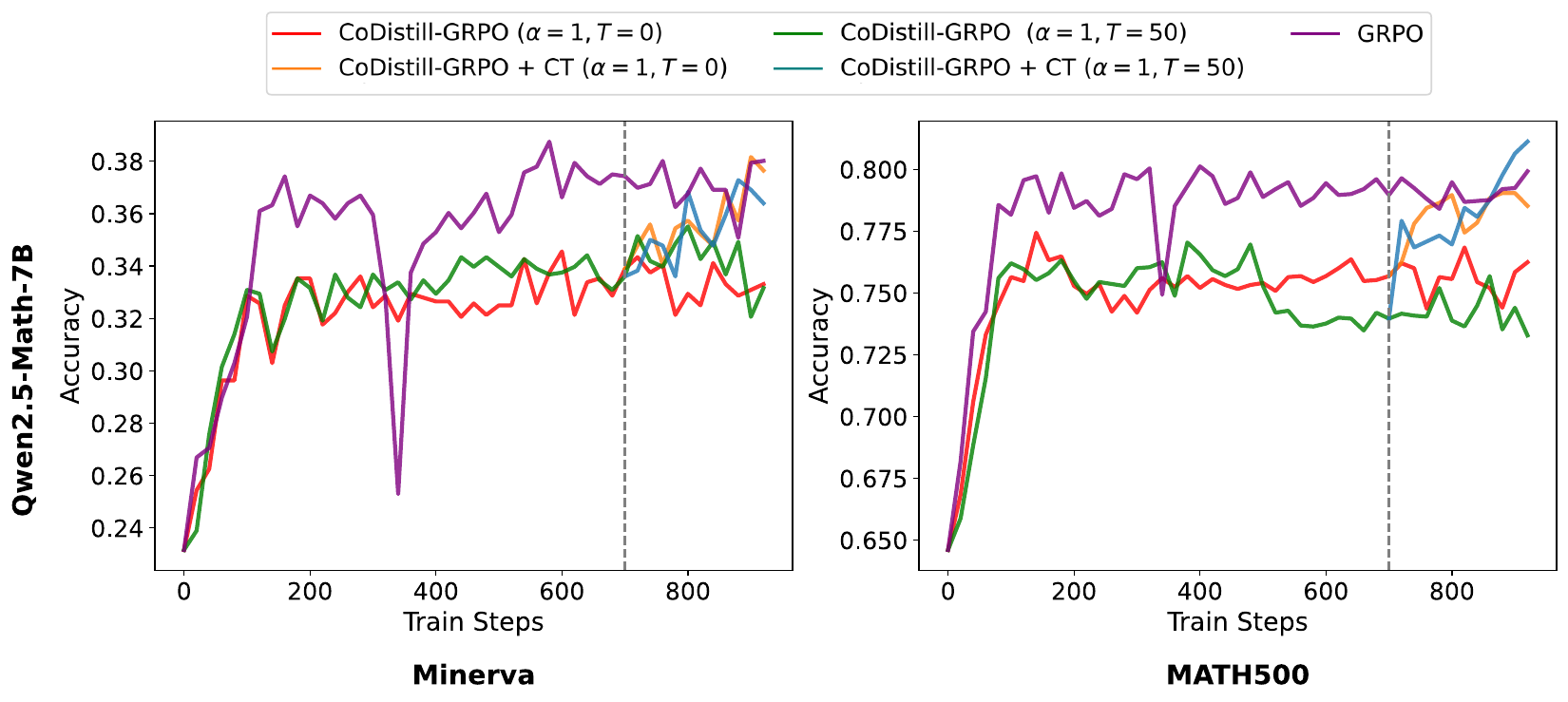}
    \caption{Test accuracy of the Qwen2.5-Math-7B model across training iterations on the Minerva and MATH500 datasets. While GRPO obtains the best performance averaged across all benchmark datasets, CoDistill-GRPO with continued training (CT) closely matches GRPO even while using rollouts from the smaller model.}
    \label{fig:large_minerva_math}
\end{figure}

\subsection{How Much Does CoDistill-GRPO Improve the Large Model?}

Since CoDistill-GRPO updates both models, an important question is how much improvement we can expect in the large model relative to GRPO. Because the large model is updated using rollouts generated by the small model, we can expect a reduction in training time.
We compare standard GRPO on the Qwen2.5-Math-7B large model against both CoDistill-GRPO and CoDistill-GRPO with continued training (CT). CT serves as a post-processing step: we take a large model checkpoint from CoDistill-GRPO and run a few additional iterations of standard GRPO. This allows the model to use its own rollouts to boost performance, rather than relying only on the small model's rollouts.

In \Cref{tab:qwen_large_results}, we present the quantitative results, where we observe that CoDistill-GRPO with CT outperforms standard GRPO on the MATH500 and OlympiadBench, while lacking behind on Minerva and AMC. Of course, CoDistill-GRPO without CT does not outperform standard GRPO, as small model rollouts alone are not sufficient for substantial improvements. In \Cref{fig:large_minerva_math}, we show the evolution of test accuracy during training, indicating where CT is introduced and demonstrating how much it improves the large model's performance.

Finally, in \Cref{fig:large_model_efficiency} (left), we illustrate the time savings of CoDistill-GRPO compared to standard GRPO. By measuring the execution time of a single training iteration, we demonstrate an 18\% reduction per step for CoDistill-GRPO when excluding CT and downsampling. 
In \Cref{fig:large_model_efficiency} (right), we present the total wall-clock time required strictly for rollout generation over a full training run. Here, standard GRPO generates $G=8$ rollouts, whereas CoDistill-GRPO generates $M=14$ rollouts prior to downsampling. We show that even when factoring in the overhead of CT, hinting (since $T=50$), and downsampling, CoDistill-GRPO provides overall training-time savings during rollout generation.


\begin{figure}[t!]
    \centering

    \begin{subfigure}{0.45\linewidth}
        \centering
        \includegraphics[width=\linewidth]{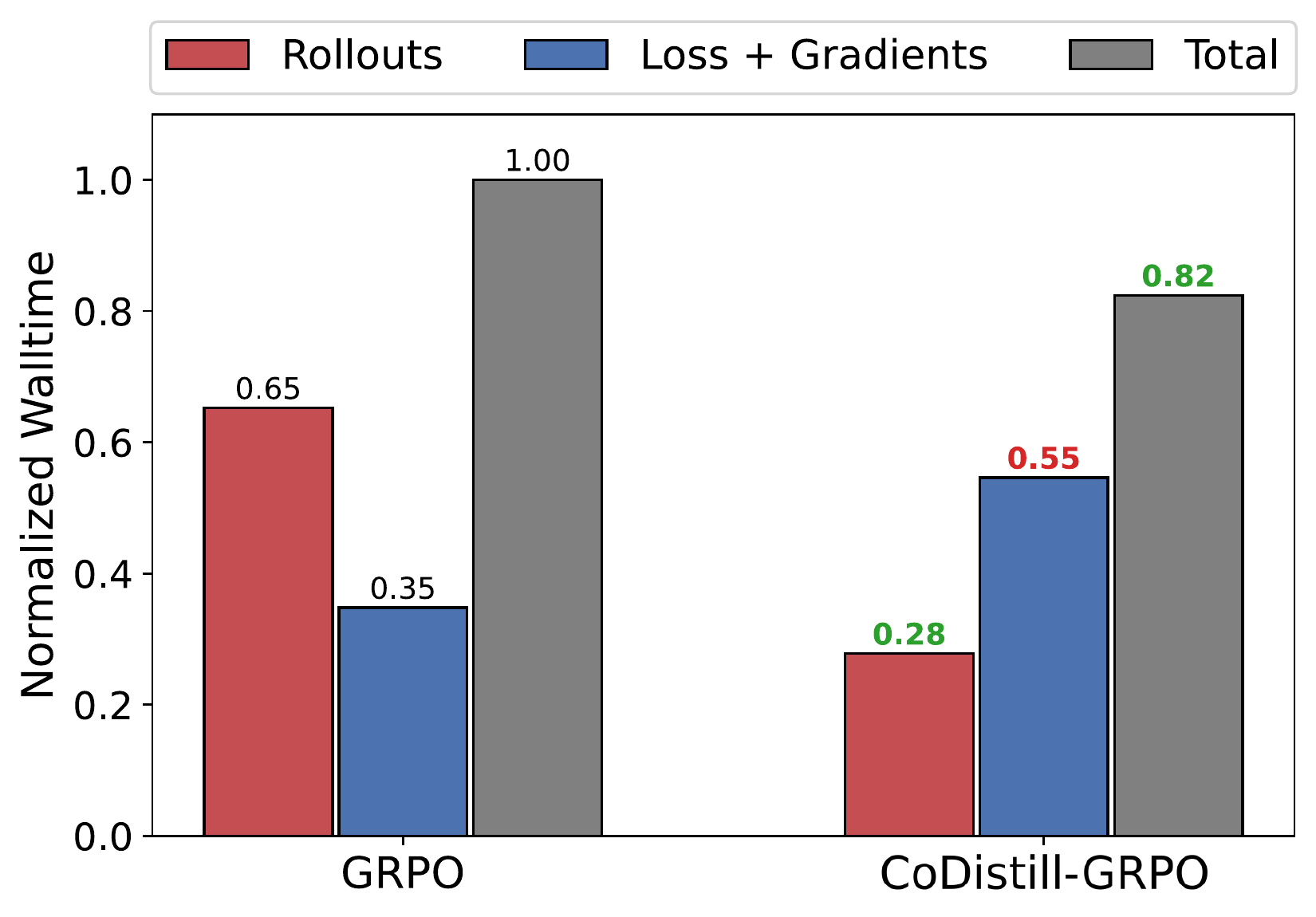}
        \label{fig:theoretical_savings}
    \end{subfigure}
    \hfill
\begin{subfigure}{0.45\linewidth}
    \centering
    \raisebox{0.1in}{
        \includegraphics[width=0.85\linewidth]{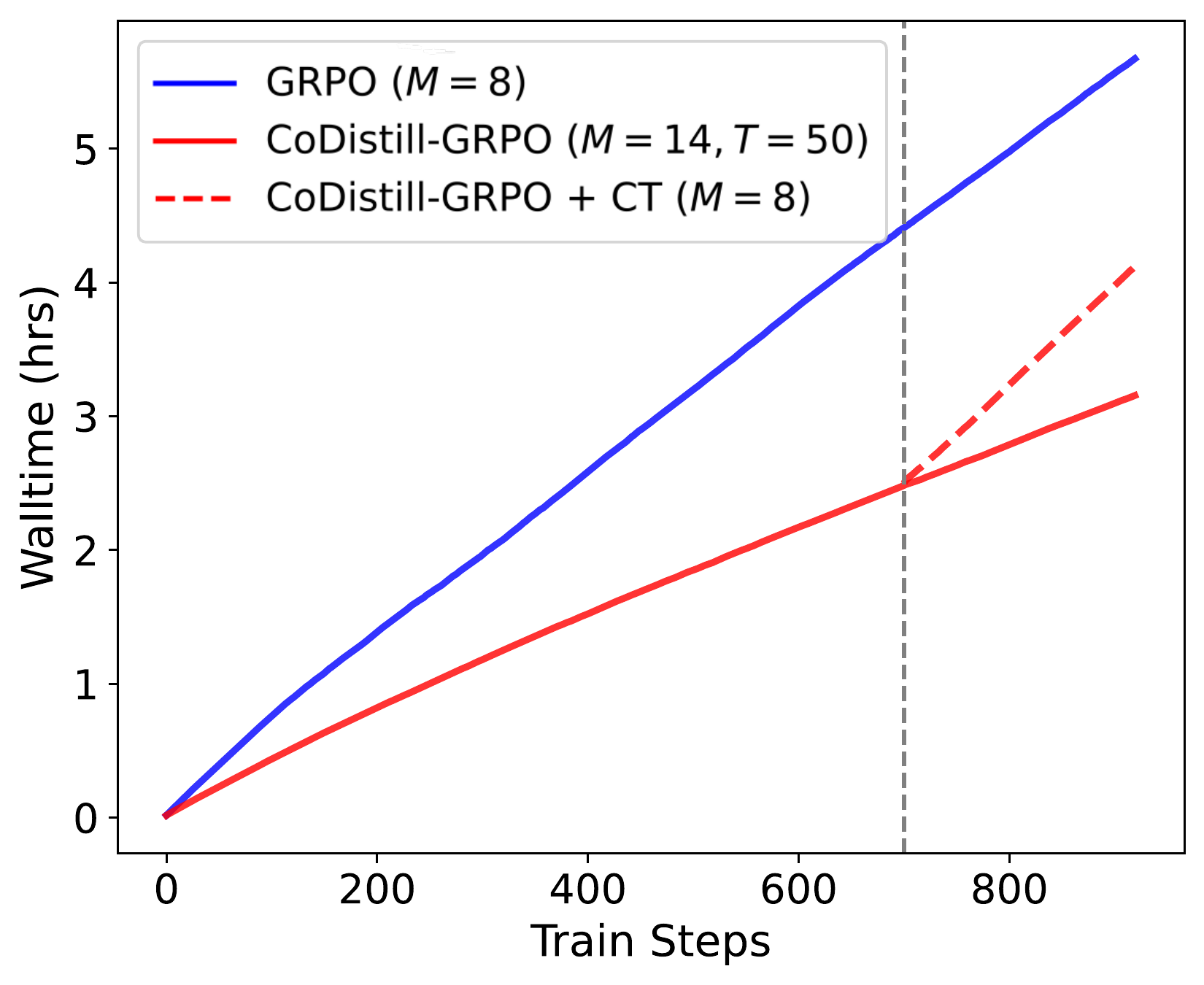}
    }
        \label{fig:rollout_times}
    \end{subfigure}

    \caption{Comparison of training efficiency between CoDistill-GRPO and GRPO. Left: Demonstration of the potential time savings using CoDistill-GRPO over GRPO to train a Qwen2.5-Math-7B model on a single 80GB H100 GPU without CT. Even though CoDistill-GRPO updates two models on every iteration (the small and the large), it achieves a reduction in training time per iteration due rollout generation. Right: Cumulative walltime in hours for rollout generation. Even with CT, CoDistill-GRPO has a reduction in training time on Qwen2.5-Math-7B.}
    \label{fig:large_model_efficiency}
\end{figure}

\section{Conclusion}
\label{sec:conclusion}
This work presented CoDistill-GRPO, a co-distillation algorithm designed to improve GRPO performance on small language models. CoDistill-GRPO alleviates the assumption that a powerful, pretrained large model must already exist as an oracle to provide hints to the smaller model by jointly training two models in tandem, while relying on rollouts generated by the small model. To achieve this, CoDistill-GRPO combines initial hint tokens with an on-policy KD reward and a downsampling mechanism. These components enable CoDistill-GRPO to substantially improve the performance of the small model relative to existing baselines, and even enhance the performance of the large model with continued training (CT), all while reducing overall training time.
There remain several promising directions for future work. One important avenue is to improve the performance of the large model so that it can outperform GRPO without CT, which may in turn further enhance the performance of the small model. Another avenue is to explore how speculative decoding could be integrated into the algorithm: if the two models are sufficiently aligned via co-distillation, this may enable faster inference while minimizing discrepancies between the large and small model completions.

\clearpage

\bibliography{main}

\clearpage

\onecolumn
\par\noindent\rule{\textwidth}{1pt}
\begin{center}
{\Large \bf Appendix}
\end{center}
\vspace{-0.1in}
\par\noindent\rule{\textwidth}{1pt}
\appendix

\tableofcontents

\clearpage

\section{Training Details}
\label{sec:training_dets}

\begin{table}[h!]
\centering
\begin{tabular}{l|c|c}
\toprule
\textbf{Hyperparameter} & Qwen Models & Llama Models \\
\midrule
 Learning Rate & $5\times 10^{-6}$ & $1\times 10^{-7}$ \\
\midrule
 Learning Rate Schedule & Cosine & Constant \\
\midrule
 Learning Rate Warmup Steps & $5\%$  & $0\%$  \\
\midrule
 Max Completion Length & $3072$ & $3072$ \\
 \midrule
 Max Prompt Length & $1024$ & $1024$ \\
 \midrule
 Batch Size Per GPU & $64$ & $64$ \\
 \midrule
 Number of Rollouts & $M=14, G=8$ & $M=12, G=8$ \\
  \midrule
 Total Epochs & $8$ & $2$ \\
\bottomrule
\end{tabular}
\caption{Hyperparameter configurations used to train both Qwen and Llama models. We use a constant learning rate with no warmup steps for Llama models, as this yielded the best performance. For CoDistill-GRPO, we generate an initial $M$ traces, for which $G$ are chosen using the downsampling method. For GRPO, we generate and update using $G$ rollouts.}
\label{tab:hyperparameters}
\end{table}

In this section, we present the omitted training details that support the experiments in \Cref{sec:experiments}. All experiments were conducted using 8 H100 GPUs unless otherwise specified. We report the hyperparameter settings in \Cref{tab:hyperparameters}. The same set of hyperparameters was used for both the small and large models, as well as for CoDistill-GRPO and GRPO, as this configuration yielded the best performance.
For the number of rollouts, $M$ denotes the number of rollouts initially generated for CoDistill-GRPO, while $G$ denotes the number of rollouts selected after downsampling. For GRPO, we directly generate $G$ rollouts and perform updates using these $G$ samples. In experiments with Llama models, we found that using a constant learning rate achieved the best performance on the test datasets. We also observed that increasing the number of epochs was detrimental, as longer training led to instability.

Following DeepSeek-R1~\citep{deepseek-math, deepseekai2025deepseekr1incentivizingreasoningcapability}, we add an additional format reward for Qwen models to encourage adherence to a specified output format. The format used for the Qwen models is shown in \Cref{tab:deepseek-template}. The model receives a reward of $0.25$ for each of the \texttt{</think>} and \texttt{</answer>} tokens, for a total possible format reward of $1.00$. For accuracy rewards, each correct answer is also assigned a reward of $1.00$.
For the Llama models, we use the chat template shown in \Cref{tab:llama-template}, as we observed that this template provides a substantial performance improvement compared to the template used for Qwen. However, we omit format rewards for Llama models and include only the accuracy reward to reflect this difference.

\begin{table}[t]
\centering
\renewcommand{\arraystretch}{1.2}
\begin{tabular}{p{0.9\linewidth}}
\hline
A conversation between User and Assistant. The user asks a question, and the Assistant solves it.
The assistant first thinks about the reasoning process in the mind and then provides the user
with the answer. The reasoning process and answer are enclosed within
\texttt{<think></think>} and \texttt{<answer></answer>} tags, respectively, i.e.,
\texttt{<think> reasoning process here </think>} and
\texttt{<answer> answer here </answer>}.
User: $\{$\textcolor{red}{prompt}$\}$. Assistant:
\\
\hline
\end{tabular}
\caption{Chat template for DeepSeek-R1-Zero~\citep{deepseekai2025deepseekr1incentivizingreasoningcapability} that is used to train the Qwen models for both GRPO and CoDistill-GRPO. The \textcolor{red}{prompt} is replaced with the specific reasoning question during training.}
\label{tab:deepseek-template}
\end{table}

\begin{table}[t]
\centering
\renewcommand{\arraystretch}{1.2}
\begin{tabular}{@{}p{0.9\linewidth}@{}}
\hline
Please reason step by step, and put your final answer within \texttt{\textbackslash boxed\{\}}. \\
Question: \{\textcolor{red}{prompt}\}. Answer: \\
\hline
\end{tabular}
\caption{Chat template used to train the Llama models for both GRPO and CoDistill-GRPO. The \textcolor{red}{prompt} is replaced with the specific reasoning question during training.}
\label{tab:llama-template}
\end{table}

\section{Ablation Studies}
\label{sec:ablation}

In this section, we present ablation studies investigating the algorithmic choices made in CoDistill-GRPO. All ablation studies were performed on the Qwen models. In \Cref{fig:vanilla_hybrid_fail}, we demonstrate how the KD reward and rejection-sampling components are crucial for simultaneously training both models. We observe that without these two components, the large model becomes unstable during training when evaluated on Minerva and MATH500. This shows that simply using small-model rollouts to update the large model is insufficient.

In \Cref{fig:initial_trace_ign}, we study the effect of using initial traces (or hints) from the large model throughout all training iterations, as opposed to only during the early iterations. On the left, we plot the training accuracy of the small model over time. When the training accuracy drops, the large model is updated using rollouts with low accuracy; consequently, the large model’s performance degrades significantly and fails to recover. In contrast, the setting with $T=0$ trains stably.
On the right, we show that ignoring the initial traces provided by the large model in the importance-sampling update for the small-model objective also leads to unstable training. Finally, in \Cref{fig:skl_failure}, we show that using the KL-divergence definition in \Cref{eqn:kl_div} results in training instability.

\begin{figure}[h!]
    \centering
    \includegraphics[width=0.9\linewidth]{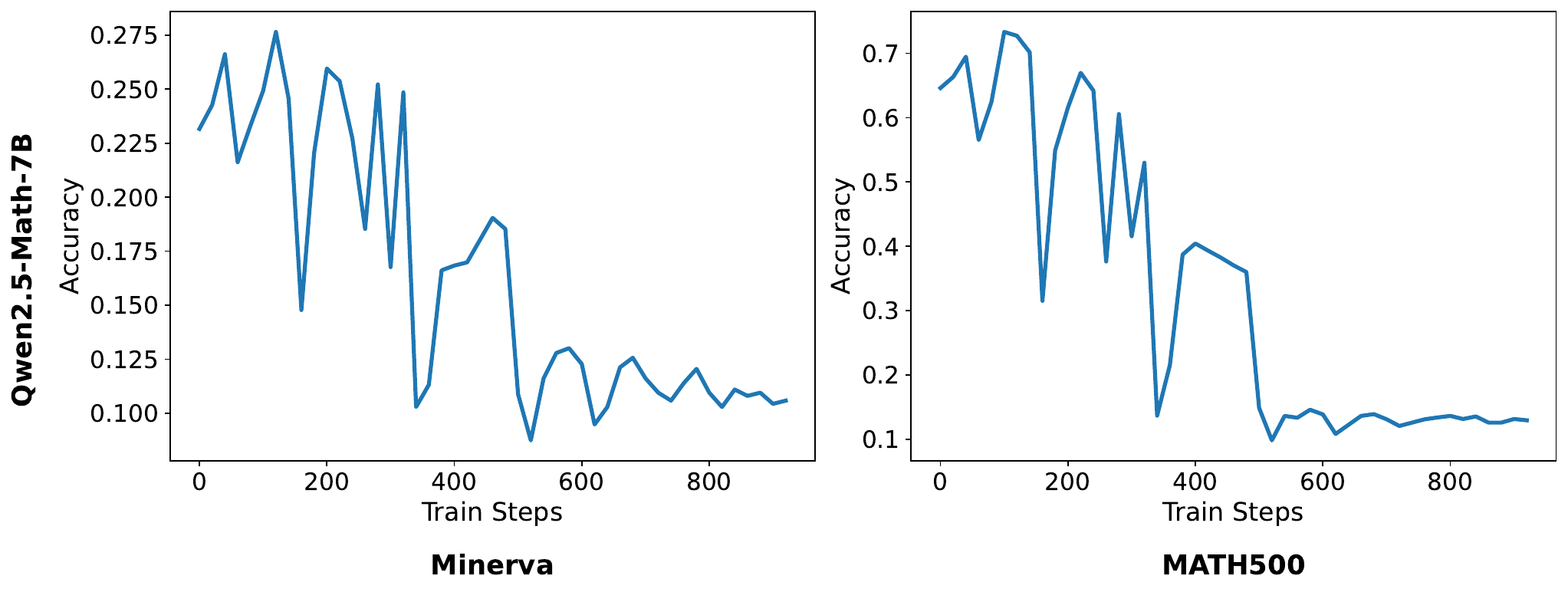}
    \caption{Using small model rollouts to update the policy model with no knowledge distillation or downsampling. These two components that make up CoDistill-GRPO are critical for making simultaneous training of models feasible.}
    \label{fig:vanilla_hybrid_fail}
\end{figure}

\begin{figure}[h!]
    \centering

    \begin{subfigure}{0.45\linewidth}
        \centering
        \includegraphics[width=\linewidth]{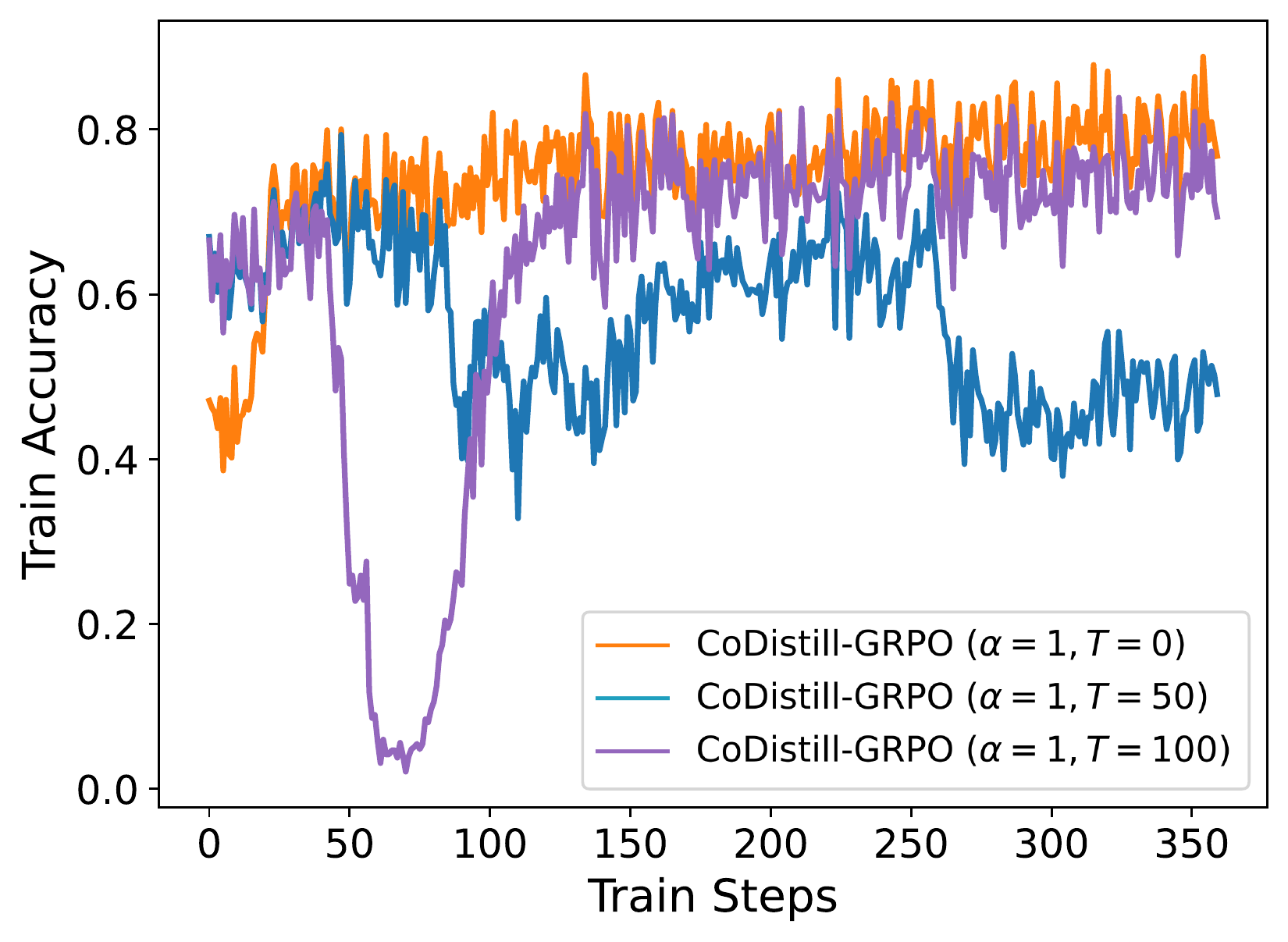}

    \end{subfigure}
    \hfill
    \begin{subfigure}{0.45\linewidth}
        \centering
        \includegraphics[width=\linewidth]{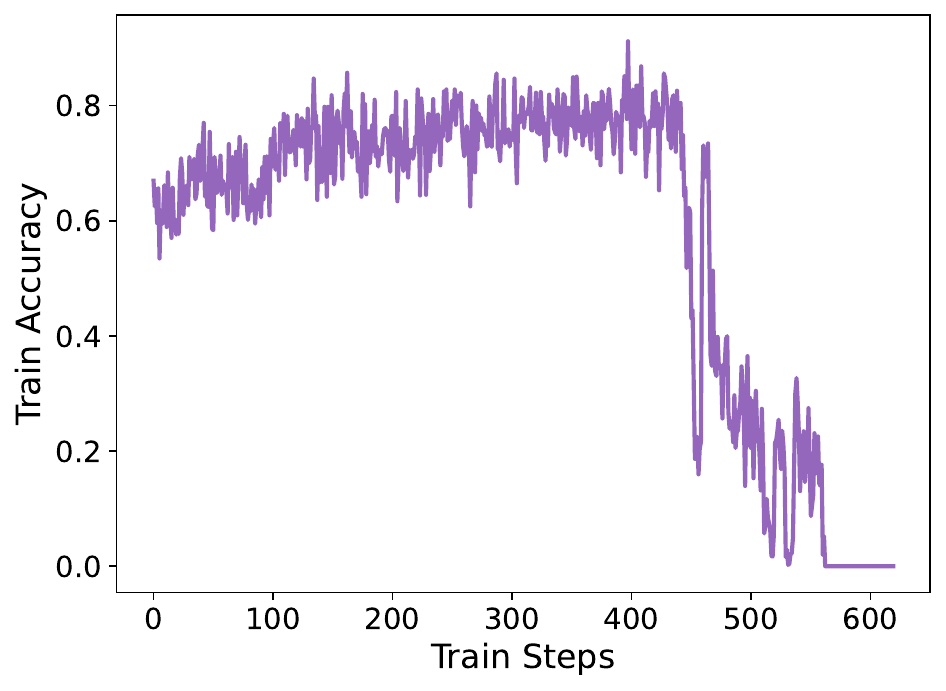}
    \end{subfigure}

    \caption{Left: Effects of using initial traces from the large model for all iterations of training for different values of $T$. Providing initial traces does not always help and makes training unstable. Right: Plot of the train accuracy of the small model for ignoring the initial traces in the importance sampling for CoDistill-GRPO as done in BREAD. Ignoring the initial traces makes training of CoDistill-GRPO unstable.}
    \label{fig:initial_trace_ign}
\end{figure}

\begin{figure}[h!]
    \centering
    \includegraphics[width=0.5\linewidth]{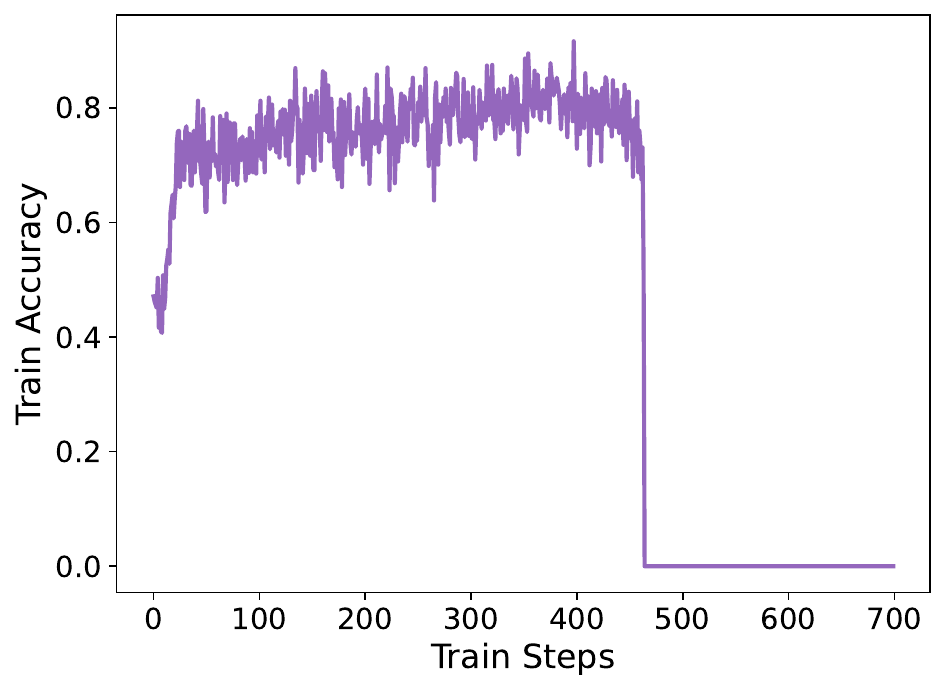}
    \caption{Plot of the train accuracy during training of the small model using the KL divergence as defined in \Cref{eqn:kl_div}. Using \Cref{eqn:kl_div} is less robust as opposed to using the KL divergence in the reward of \Cref{eqn:small_reward}.}
    \label{fig:skl_failure}
\end{figure}

\clearpage

\section{Additional Figures}
\label{sec:add_figures}

In this section, we present the test accuracy across training iterations that were deferred in \Cref{sec:experiments}.

\begin{figure}[h!]
    \centering
    \includegraphics[width=0.9\linewidth]{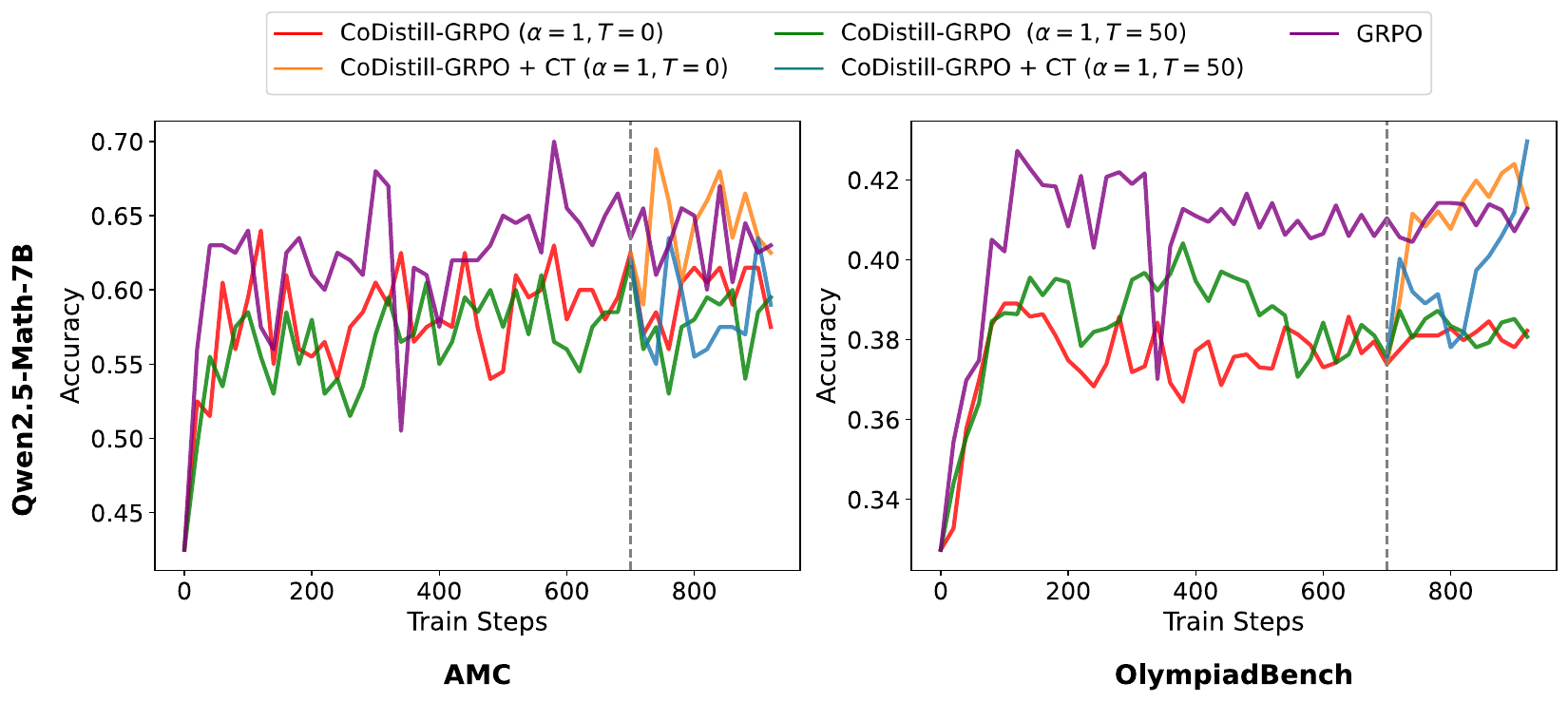}
    \caption{Test accuracy of the Qwen2.5-Math-7B model across training iterations on the AMC and OlympiadBench datasets.}
    \label{fig:7b_amc_ob}
\end{figure}

\begin{figure}[h!]
    \centering
    \includegraphics[width=0.9\linewidth]{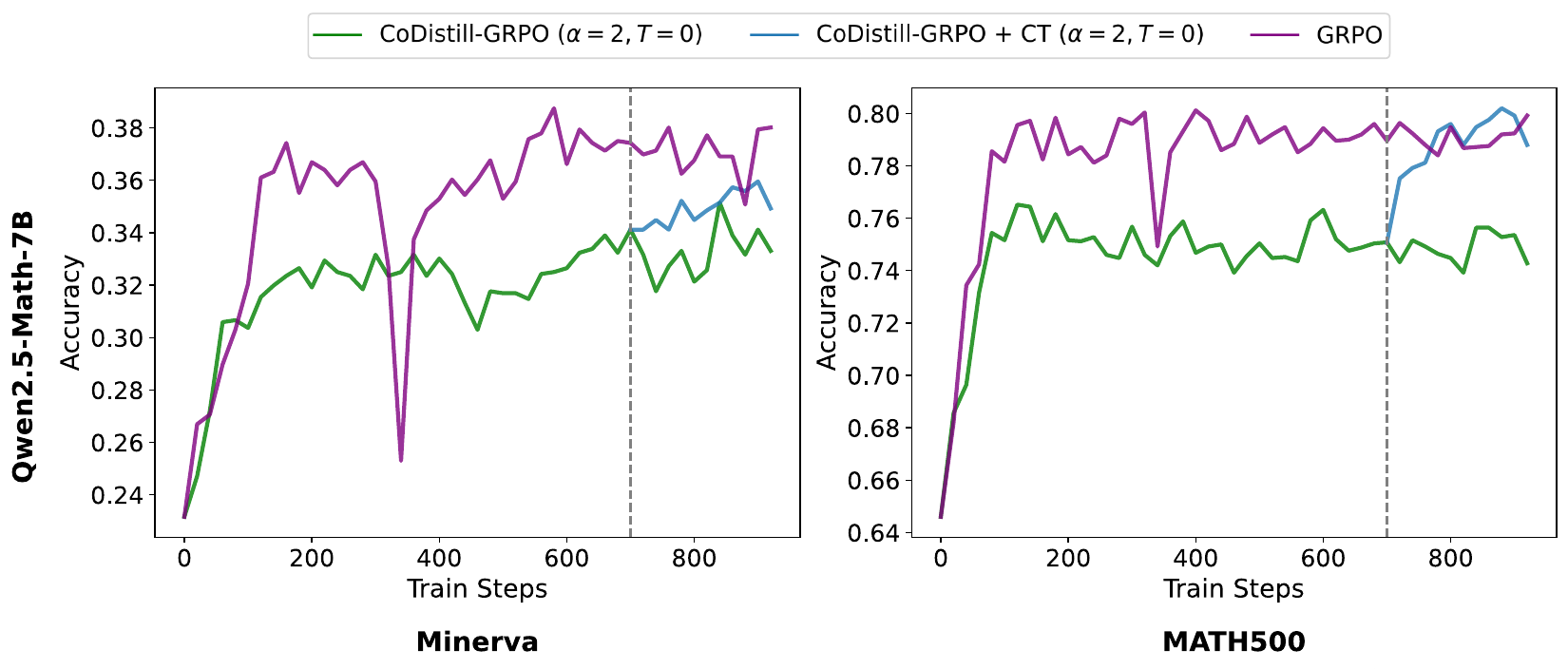}
    \caption{Test accuracy of the Qwen2.5-Math-7B model across training iterations on the Minerva and MATH500 datasets with $\alpha = 2, T=0$.}
    \label{fig:7b_minerva_math_a2}
\end{figure}

\begin{figure}[h!]
    \centering
    \includegraphics[width=0.9\linewidth]{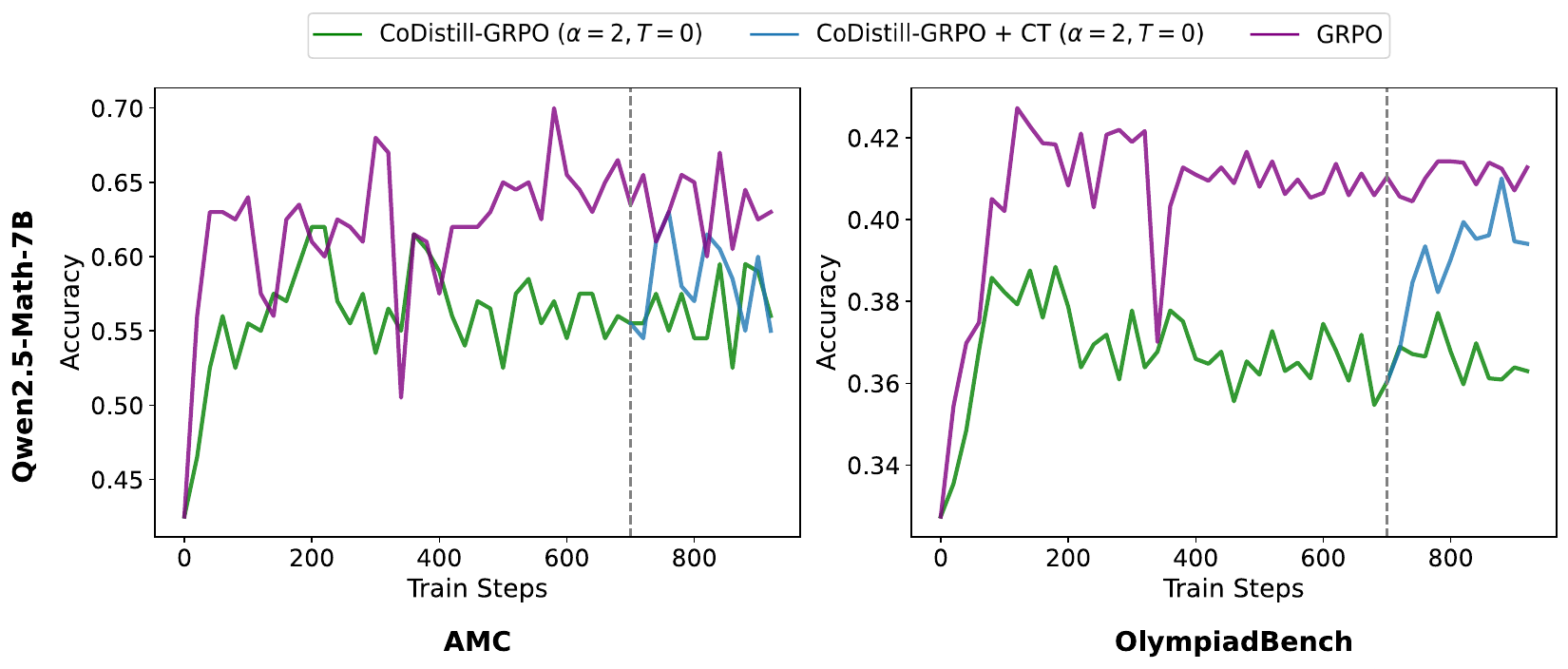}
    \caption{Test accuracy of the Qwen2.5-Math-7B model across training iterations on the AMC and OlympiadBench datasets with $\alpha = 2, T=0$.}
    \label{fig:7b_amc_ob_a2}
\end{figure}

\begin{figure}[h!]
    \centering
    \includegraphics[width=0.9\linewidth]{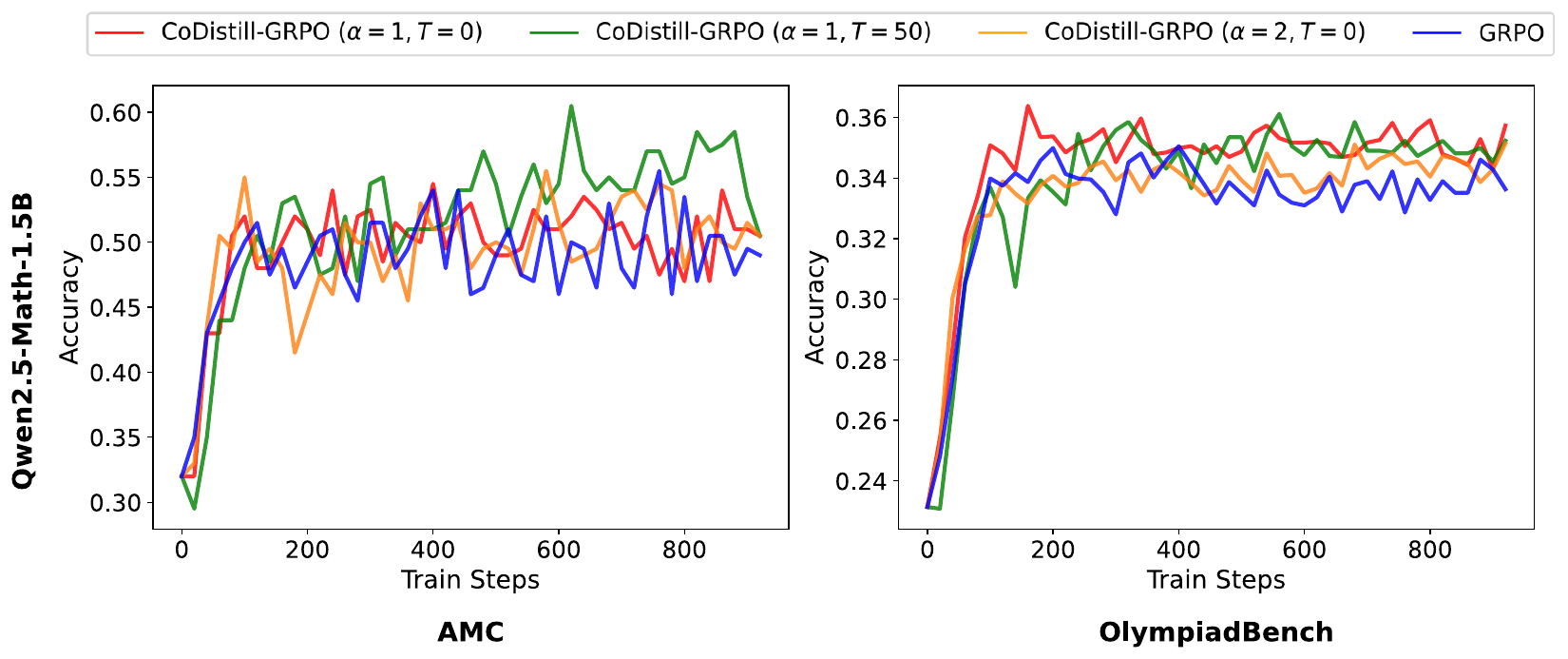}
    \caption{Test accuracy of the Qwen2.5-Math-1.5B model across training iterations on the AMC and OlympiadBench datasets.}
    \label{fig:1_5b_amc_ob}
\end{figure}

\clearpage

\section{Deferred Proofs}
\label{sec:deferred_proofs}

\subsection{Proof of \Cref{lem:unbiased}}

\begin{proof}
    Recall that the reward $\widetilde{r}_i$ depends on the model parameters $\phi$. Using the product rule, the gradient of the expected loss in \Cref{eqn:small_obj} with respect to $\phi$ yields
    \begin{align*}
        \nabla_{\phi} \mc{L}_{\texttt{Small}}(\phi) \, |_{\phi = \phi_{\text{old}}} &= \frac{1}{G}\sum_{i=1}^G  \sum_{t=1}^N \mbb{E} \biggl[ \underbrace{\widetilde{A}_{i, t} \cdot \nabla_{\phi} \log \pi_{\phi}\left( o_{i,t} \, | \, q, o_{i, <t}\right)}_{\text{policy gradient}} +  \underbrace{\nabla_{\phi}\widetilde{r}_i(\phi) - \nabla_{\phi} \mathrm{mean}\left(\widetilde{\mathbf{r}}(\phi) \right)}_{\text{reward gradient from KD}}\biggr],
    \end{align*}
where 
\begin{align*}
    \nabla_{\phi}\widetilde{r}_i(\phi) = -\alpha \cdot \frac{1}{N} \sum_{t=1}^N \nabla_{\phi} \log \pi_{\phi}\left( o_{i,t} \, | \, q, o_{i, <t}\right).
\end{align*}
Notice that 
\begin{align}
\label{eqn:kd_term_zero}
    \mbb{E}_{o_i \sim \pi_{\phi}(\cdot | q)}\left[ \nabla_{\phi}\widetilde{r}_i(\phi)\right] &= -\alpha \cdot \frac{1}{N} \sum_{t=1}^N \mbb{E}_{o_i \sim \pi_{\phi}(\cdot | q)}\left[\nabla_{\phi} \log \pi_{\phi}\left( o_{i,t} \, | \, q, o_{i, <t}\right)\right] \\
    &= -\alpha \cdot \frac{1}{N} \sum_{t=1}^N \mbb{E}_{o_{i, <t}}\left[\mbb{E}_{o_{i, t} \sim \pi_{\phi}(\cdot | q, o_{i, <t})}\left[\nabla_{\phi} \log \pi_{\phi}\left( o_{i,t} \, | \, q, o_{i, <t}\right)\right]\right] \tag{Tower Property}\\
    &= -\alpha \cdot \frac{1}{N} \sum_{t=1}^N \mbb{E}_{o_{i, <t}}\left[\nabla_{\phi}\int \pi_{\phi}\left( o_{i,t} \, | \, q, o_{i, <t}\right) \right] \tag{Leibniz Rule}\\
    &= -\alpha \cdot \frac{1}{N} \sum_{t=1}^N \nabla_{\phi}1 \\
    &= 0,
\end{align}
where the tower property was used for each token index $t$.
Hence, the reward gradient from KD is zero in expectation, yielding the following remaining gradient:
\begin{align*}
    \nabla_{\phi} \mc{L}_{\texttt{Small}}(\phi) \, |_{\phi = \phi_{\text{old}}} &= \frac{1}{G}\sum_{i=1}^G  \sum_{t=1}^N \mbb{E} \biggl[ \widetilde{A}_{i, t} \cdot \nabla_{\phi} \log \pi_{\phi}\left( o_{i,t} \, | \, q, o_{i, <t}\right) \biggr].
\end{align*}
For simplicity in notation, let us denote 
\begin{align*}
    s_i \coloneqq \sum_{t=1}^N \nabla_{\phi} \log \pi_{\phi}\left( o_{i,t} \, | \, q, o_{i, <t}\right).
\end{align*}
Then, using the fact that $\widetilde{A}_{i, t} = \widetilde{A}_i = \widetilde{r}_i - \mathrm{mean}(\widetilde{\mathbf{r}})$, notice that
\begin{align*}
     \frac{1}{G} \sum_{i=1}^G \widetilde{A}_i \cdot s_i &= \frac{1}{G}\sum_{i=1}^G  \widetilde{r}_i \cdot s_i -  \frac{1}{G} \mathrm{mean}(\widetilde{\mathbf{r}}) \sum_{i=1}^G s_i \\
     &= \frac{1}{G}\sum_{i=1}^G  \widetilde{r}_i \cdot s_i - \frac{1}{G^2} \left(\sum_{j=1}^G \widetilde{r}_j \right) \left(\sum_{i=1}^G s_i \right).
\end{align*}
By taking the conditional expectation on $q$, we obtain
\begin{align*}
    \mbb{E}\left[ \frac{1}{G}\sum_{i=1}^G  \widetilde{r}_i \cdot s_i \, | \, q \right] - \mbb{E}\left[ \frac{1}{G^2} \left(\sum_{j=1}^G \widetilde{r}_j \right) \left(\sum_{i=1}^G s_i \right) \, | \, q \right] &= \mbb{E}[\widetilde{r}_i s_i \, | \, q] - \frac{1}{G^2}\sum_i^{G} \mbb{E}[\widetilde{r}_i s_i \, | \, q] - \frac{1}{G^2}\underbrace{\sum_{i \neq j}\mbb{E}[\widetilde{r}_j s_i \, | \, q]}_{=0 \text{  (by independence)}} \\
    &= \left(\frac{G-1}{G}\right) \mbb{E}\left[\widetilde{r}_i s_i \, | \, q \right]
\end{align*}
Taking the average over $q$, we have
\begin{align}
\label{eqn:grpo_grad}
    \nabla_{\phi} \mc{L}_{\texttt{Small}}(\phi) = \left(\frac{G-1}{G}\right) \cdot \mbb{E}\left[\widetilde{r}_i \cdot  \sum_{t=1}^N \nabla_{\phi} \log \pi_{\phi}\left( o_{i,t} \, | \, q, o_{i, <t}\right)  \right].
\end{align}
Now we derive the true policy gradient by defining the reward objective:
\begin{align*}
    J(\phi) \coloneqq \mbb{E}_{q\sim P(Q), o_i\sim \pi_{\phi}(\cdot | q)} \left[ \widetilde{r}_{\phi}(q, o_i)\right],
\end{align*}
which yields
\begin{align*}
    \nabla_{\phi}J(\phi) &= \mbb{E} \left[ \widetilde{r}_{\phi}(q, o_i) \cdot \nabla_{\phi}\log \pi_{\phi}\left(o_i \, | \, q\right) \right] + \underbrace{\mbb{E}\left[\nabla_{\phi}\widetilde{r}_{\phi}(q, o_i) \right]}_{=0 \text{  (derived in \Cref{eqn:kd_term_zero})}} \\
    &= \mbb{E} \left[ \widetilde{r}_{\phi}(q, o_i) \cdot \sum_{t=1}^N\nabla_{\phi}\log \pi_{\phi}\left(o_{i, t} \, | \, q, o_{i, <t}\right) \right].
\end{align*}
Then by multiplying the learning rate $\eta$ on both sides for the update rule:
\begin{align*}
    \eta \cdot \nabla_{\phi}J(\phi) &= \eta \cdot \mbb{E}\left[\widetilde{r}_i \cdot \sum_{t=1}^N \nabla_{\phi} \log \pi_{\phi}\left( o_{i,t} \, | \, q, o_{i, <t}\right)  \right] \\
    &= \underbrace{\eta \left(\frac{G}{G-1} \right)}_{= \eta_{\text{eff}}} \nabla_{\phi} \mc{L}_{\texttt{Small}}(\phi) \\ 
    &= \eta_{\text{eff}} \cdot\nabla_{\phi} \mc{L}_{\texttt{Small}}(\phi),
\end{align*}
which completes the proof. We remark that the adjustment in learning rate also yields a gradient equivalent to the RLOO~\citep{ahmadian2024basicsrevisitingreinforcestyle} gradient, which suggests an alternative direction for the proof.
\end{proof}

\subsection{Proof of \Cref{thm:gradient}}

\begin{proof}
    Let us define
    \begin{align*}
        J_q(\phi) \coloneqq \mbb{E}_{o_i \sim \pi_{\phi}(\cdot | q)}\left[\widetilde{r}_{\phi}(q, o_i)\right] \quad \text{such that} \quad J(\phi) = \mbb{E}_{q \sim P(Q)}\left[J_q(\phi)\right].
    \end{align*}
    Recall that by \Cref{lem:unbiased}, we have 
    \begin{align*}
        \eta_{\text{eff}} \cdot \mbb{E}\left[\nabla_{\phi} \mc{L}_{\texttt{Small}}(\phi) \, | \, q\right] = \eta \cdot \nabla_{\phi} J_q(\phi),
    \end{align*}
    conditioned on $q$, before averaging over $q$. Hence, it  suffices to decompose $J_q(\phi)$. Expanding the effective reward from \Cref{eqn:small_reward}:
    \begin{align*}
        J_q(\phi) = \mbb{E}_{o_i \sim \pi_{\phi}(\cdot | q)}\left[r(q, o_i)\right] + \alpha \cdot \mbb{E}_{o_i \sim \pi_{\phi}(\cdot | q)}\left[\frac{1}{N}\sum_{t=1}^N \log \frac{\pi_{\theta}(o_{i,t} | q, o_{i, <t})}{\pi_{\phi}(o_{i,t} | q, o_{i, <t})}\right].
    \end{align*}
    By the chain rule, we have
    \begin{align*}
        \sum_{t=1}^N \log \frac{\pi_{\theta}(o_{i,t} | q, o_{i, <t})}{\pi_{\phi}(o_{i,t} | q, o_{i, <t})} = \log \frac{\pi_{\theta}(o_i | q)}{\pi_{\phi}(o_i | q)},
    \end{align*}
    and so the second term becomes
    \begin{align*}
        \frac{\alpha}{N} \cdot \mbb{E}_{o_i \sim \pi_{\phi}(\cdot | q)}\left[\log \frac{\pi_{\theta}(o_i | q)}{\pi_{\phi}(o_i | q)}\right] = -\frac{\alpha}{N} \sum_{o_i} \pi_{\phi}(o_i | q) \cdot \log \frac{\pi_{\phi}(o_i | q)}{\pi_{\theta}(o_i | q)} = -\frac{\alpha}{N} \cdot \mbb{D}_{\texttt{KL}}\left(\pi_{\phi}(\cdot | q) \| \pi_{\theta}(\cdot | q)\right).
    \end{align*}
    Substituting back, we obtain
    \begin{align*}
        J_q(\phi) = \mbb{E}_{o_i \sim \pi_{\phi}(\cdot | q)}\left[r(q, o_i)\right] - \frac{\alpha}{N} \mbb{D}_{\texttt{KL}}\left(\pi_{\phi}(\cdot | q) \| \pi_{\theta}(\cdot | q)\right).
    \end{align*}
    Both terms are differentiable in $\phi$, and so taking the gradient and applying \Cref{lem:unbiased}, we have
    \begin{align*}
        \eta_{\text{eff}} \cdot \mbb{E}\left[\nabla_{\phi} \mc{L}_{\texttt{Small}}(\phi) \;\big|\; q\right] = \eta \cdot \nabla_{\phi} \mbb{E}_{o_i \sim \pi_{\phi}(\cdot | q)}\left[r(q, o_i)\right] - \eta \cdot \frac{\alpha}{N} \nabla_{\phi} \mbb{D}_{\texttt{KL}}\left(\pi_{\phi}(\cdot | q) \| \pi_{\theta}(\cdot | q)\right),
    \end{align*}
    which completes the proof.
\end{proof}

\end{document}